\documentclass{article}

\usepackage[e]{esvect}
\usepackage{subcaption, comment} 
\excludecomment{mycomment}
\usepackage{multirow,multicol}

\usepackage{microtype}
\usepackage{graphicx}
\usepackage{booktabs} 

\usepackage{hyperref}

\usepackage[preprint]{icml2026}

\usepackage{amsmath}
\usepackage{amssymb}
\usepackage{mathtools}
\usepackage{amsthm}

\DeclarePairedDelimiter{\norm}{\lVert}{\rVert}

\usepackage[capitalize,noabbrev]{cleveref}

\theoremstyle{plain}
\newtheorem{theorem}{Theorem}[section]

\newtheorem{lemma}[theorem]{Lemma}
\newtheorem{corollary}[theorem]{Corollary}
\theoremstyle{definition}

\newtheorem{exmp}[theorem]{Example}
\newtheorem{condi}[theorem]{Condition} 
\theoremstyle{remark}
\newtheorem{remark}[theorem]{Remark}

\usepackage[textsize=tiny]{todonotes}

\icmltitlerunning{Distribution Modeling via Sparse Bayesian Networks}

\begin{document}

\twocolumn[

\icmltitle{Breaking the Curse with BAND: Nonparametric Distribution Estimation in High Dimensions}



\icmlsetsymbol{equal}{*}

\begin{icmlauthorlist}
\icmlauthor{Shuo-Chieh Huang}{ru}
\icmlauthor{Chien-Ming Chi}{as}
\icmlauthor{Jau-er Chen}{spe}
\end{icmlauthorlist}

\icmlaffiliation{ru}{Department of Statistics, Rutgers University, Piscataway, NJ, U.S.A.}
\icmlaffiliation{as}{Institute of Statistical Science, Academia Sinica, Taipei, Taiwan}
\icmlaffiliation{spe}{School of Political Science and Economics, National Taiwan University, Taipei, Taiwan}

\icmlcorrespondingauthor{Chien-Ming Chi}{xbbchi@stat.sinica.edu.tw}

\icmlkeywords{Sparse Bayesian networks; high-dimensional time series; discretization regression; forecasting confidence region; mixed-type data}



\vskip 0.3in
]



\printAffiliationsAndNotice{}  

\begin{abstract}

Minimax-optimal rates for multivariate distribution estimation are known to suffer from the curse of dimensionality. We propose a sparse Bayesian network approach in which each conditional probability is estimated using sparsity-aware conditional mean methods. The resulting estimator, \textit{BAyesian Network Distribution regression} (BAND), handles mixed data types in high-dimensional time series and achieves polynomial total variation convergence rates while allowing the feature dimension to grow polynomially with the sample size. These rates are substantially faster than the classical optimal rates for multivariate histogram density estimators that lack sparsity. Empirical evaluations show that BAND performs competitively for data sampling and confidence region forecasting against a range of state-of-the-art benchmarks.



\end{abstract}

\section{Introduction}

Distribution modeling underlies many statistical tasks, including multivariate confidence region construction \citep{vsidak1967rectangular}, out-of-distribution detection \citep{liu2024elephant}, missing value imputation \citep{Muzellec2020missing, huang2024temporal}, data visualization, and generative modeling. Despite its importance, nonparametric distribution estimation remains difficult in high dimensions. Classical theory shows that, without additional structure, the optimal rate of convergence in total variation is $n^{-\beta/(2\beta+p)}$, where $\beta > 0$ is related to the smoothness of the underlying density and $p$ the ambient dimension \citep{tsybakov2009introduction}, which is only informative when $p=o(\log n)$. This limitation reflects the curse of dimensionality.
Recent methods based on kernel estimation \citep{biroli2024kernel}, adaptive partitioning \citep{liu2023convergence}, distributional random forests \citep{cevid2022distributional}, and score-based diffusion models \citep{chen2022sampling, benton2023nearly, oko2023diffusion, zhang2024minimax} achieve minimax-optimal rates but still scale poorly with dimension. 

In contrast, by exploiting the underlying sparsity structure, high-dimensional conditional mean modeling can circumvent such dimensionality issue, using tools such as regularized regression \citep{tibshirani1996regression}, tree-based models \citep{breiman2017classification}, 
and neural networks with bounded fan-in \citep{lee1996efficient}. 
Inspired by the success in the aforementioned methods, we aim to tackle the curse of dimensionality in distribution modeling by combining a (sparse) Bayesian network representation and the state-of-the-art conditional mean modeling methods.


Using a Bayesian network~\citep{jordan1999introduction, bengio1999modeling}, the joint distribution of $p$ 
variables can be represented as a product of $p$ autoregressive conditional probability functions (see~\eqref{band} in Section~\ref{Sec2.2}). This representation underlies many modern architectures in language modeling~\citep{bengio2003neural, child2019generating}, image generation~\citep{van2016pixel, salimans2017pixelcnn++}, and genetic data analysis~\citep{friedman2000using}. 

In this paper, we propose a sparse Bayesian network model in which each conditional probability is estimated using conditional mean methods that exploit sparse dependence, such as $\ell_1$-regularized regression and tree-based models. The resulting estimator, \textit{BAyesian Network Distribution regression} (BAND), mitigates the curse of dimensionality in high-dimensional distribution modeling. 
The sparse Bayesian network assumption not only allows a formal bias-variance analysis of BAND's high-dimensional convergence rates, but is also flexible enough to capture complex structures such as mixture distributions.

Using one-hot encodings and discretization-based regression methods~\citep{foresi1995conditional, hall1999methods, chernozhukov2013inference, kneib2023rage}, BAND seamlessly handles discrete, continuous, and mixed data types. In addition, our theoretical results incorporate not only i.i.d.~data but also time series data in which case the stationary distribution is of interest. Under a sparse Bayesian network structure, where each variable depends on at most $s_0$ other variables, BAND achieves polynomial total variation convergence rates with the feature dimension allowed to grow polynomially with the sample size. For continuous time series with strong mixing, the convergence rate is of order $p\, n^{-1/(3+s_0)}$, up to mild subpolynomial factors.
This rate is comparable to the state-of-the-art result for high-dimensional density estimation \citep{vandermeulen2024breaking}.
For discrete distributions, we obtain a convergence rate which scales linearly with $p$ under the sparse Bayesian network assumption, which is far faster than the pessimistic minimax convergence rate without sparsity~\citep{Han2015minimax}.


We also evaluate BAND empirically on synthetic data, illustrating its performance in data sampling  and for forecasting confidence regions in monthly U.S. economic time series. In these experiments, BAND performs competitively with several generative models and density estimators, including normalizing flows~\citep{durkan2019neural} and high-dimensional copulas~\citep{nagler2016evading}. Implementations and experiments are available at \url{https://github.com/no-name213/band}.


\subsection{Related Work} 

Multivariate distribution modeling has accumulated a sizable literature for the fixed-dimensional regime. 
Copula-based methods often require restrictive assumptions, such as the simplified vine copula 
(\citealp{nagler2016evading}; see also Section~\ref{Sec4}), or are only supported by simulation studies \citep{oh2017modeling}. 
Kernel density estimation \citep{biroli2024kernel}, adaptive partitioning \citep{liu2023convergence}, and distributional random forests \citep{cevid2022distributional} are also classical methods that do not scale to high-dimensional time series data.

Recent literature also shed light on when the curse of dimensionality can be avoided using score-based diffusion methods \citep{song2020score}, such as Barron space-valued log-relative-density \citep{cole2024score}, low-dimensional support \citep{oko2023diffusion, beyler2025optimal}, and weak log-concavity \citep{silveri2025beyond}. Alternatively, \citet{liang2026denoising} characterizes the complexity of diffusion denoising through a notion of average curvature.

In comparison, BAND escapes the curse of dimensionality by leveraging the sparse Bayesian network that allows for flexible dependence structures. 
The sparse Bayesian network is similar to the graph resilience assumption of the recent work of \citet{vandermeulen2024breaking}, which was shown to determine the sample complexity for distribution estimation with i.i.d.~data supported on $[0,1]^p$. 
However, their work only proves the sample complexity whereas this paper develops an implementable method and is applicable to general time series with mixed-type data.

\subsection{Notation}


Let $(\Omega, \mathcal{F}, \mathbb{P})$ be a probability space. Boldface symbols such as $\vv{\boldsymbol{X}}$ denote random vectors, while $\vv{x}$ denotes deterministic vectors. The indicator function is $\boldsymbol{1}\{\cdot\}$, and $\mathcal{R}$ is the Borel $\sigma$-algebra on $\mathbb{R}$. For $\vv{x} \in \mathbb{R}^k$, let $\|\vv{x}\|_2 = (\sum_{j=1}^k x_j^2)^{1/2}$ and $\|\vv{x}\|_\infty = \max_{j} |x_j|$. We denote $[1\!:\!p] = \{1, \dots, p\}$ and write $\vv{h}_{1:p} = (\vv{h}_1, \dots, \vv{h}_p)$. For $S \subset [1:p]$, $\vv{\boldsymbol{X}}_S$ denotes the sub-vector of $\vv{\boldsymbol{X}}$ indexed by $S$.

	\section{Mixed-Type Stationary Distribution Modeling}

Let $\{\vv{\boldsymbol{X}}_t = (\boldsymbol{X}_{t1}, \dots, \boldsymbol{X}_{tp})^{\top}\}_{t=1}^n$ be the observed $p$-dimensional stationary process (which includes the case of i.i.d.~observations). Our goal is to learn its stationary distribution. In general, $\vv{\boldsymbol{X}}_t$ may include lagged variables, useful in time series applications (see Section~\ref{Sec5}).
We use \(\vv{\boldsymbol{X}} = (\boldsymbol{X}_{1}, \dots, \boldsymbol{X}_{p})^{\top}\) to denote a generic random vector with joint distribution same as $\vv{\mathbf{X}}_{t}$.

\subsection{Mixed-Type Stationary Distributions}\label{Sec2.1}

\begin{figure}[ht]
    \centering
    \includegraphics[width=0.4\textwidth]{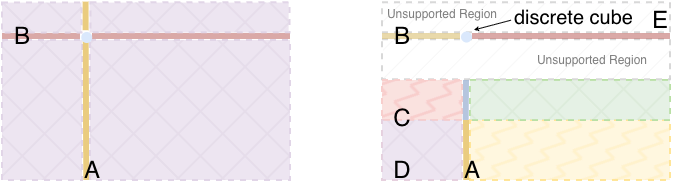}
    \caption{    
    (Left) The two-dimensional feature space is partitioned into four regions due to discrete points $A$ and $B$, including three ``discrete'' regions indexed by $\Theta = \{(\{1\}, A), (\{2\},B), (\{1,2\}, (A, B))\}$ and the rest corresponding to the continuous part. (Right) Hypercube partition of the feature support and bins along each coordinate. The discrete points $A$ and $B$ are themselves bins. There are three bins along the first dimension, and, with $C$ as an additional split, also three bins along the second dimension.
    Together, their Cartesian products constitutes nine hypercubes.
    Regions beyond the boundary points $D$ and $E$ at $(\pm \delta_n, \pm \delta_n)$ are omitted. 
}
   
    \label{fig:bins}
\end{figure}

We consider stationary distributions which are possibly of mixed-type. That is, each variable can be continuous, discrete, or mixed (with continuous and discrete components). In general, the distribution of $\vv{\boldsymbol{X}}$ can be defined by several (sub-probability) density functions (given in \eqref{stationary.dist} below).
Let $D_j$ denote the set of discrete points for the $j$-th coordinate, and define $\Theta = \{ (S, (d_j)_{j \in S}) \big| S \subseteq [1:p],\; d_j \in D_j \ \text{for all } j \in S \}$.
Then, the feature space can be partitioned into subsets indexed by $(S, (d_j)_{j \in S}) \in \Theta$, and a subset in which all coordinates are continuous.
See the left panel of Figure~\ref{fig:bins} for an example. 
To precisely define the densities, we introduce the following decomposition for every measurable $\mathcal{A} \subseteq  \mathbb{R}^p$
and each $(S, \vv{d}_S)\in \Theta$, 
 \begin{equation*}
    \begin{split}
    \mathcal{A}(S, \vv{d}_S) 
& =  \mathcal{A} 
\cap W_1(S, \vv{d}_S)
\cap W_2(S, \vv{d}_S),
    \end{split}
\end{equation*}
with $W_1(S, \vv{d}_S)  = \cap_{j \in S} \{ \vv{v} \in \mathbb{R}^p : v_j = d_j \}$,
$W_2(S, \vv{d}_S)  = \cap_{j \in S^c} \{ \vv{v} \in \mathbb{R}^p : v_j \not\in D_j \}$, $S^c \coloneqq [1:p]\setminus S$, and 
$\mathcal{A}_c = \mathcal{A} \setminus \big(\cup_{(S, \vv{d}_S) \in \Theta} \mathcal{A}(S, \vv{d}_S)\big)$. Then the stationary distribution of $\vv{\boldsymbol{X}}$ is represented by 
$\mathbb{P}(\vv{\boldsymbol{X}} \in \mathcal{A} ) 
 = \mathbb{P}(\vv{\boldsymbol{X}}\in \mathcal{A}_c) + \sum_{(S, \vv{d}_S) \in \Theta} \mathbb{P}(\vv{\boldsymbol{X}}\in \mathcal{A}(S, \vv{d}_S))$, where
\begin{equation}\label{stationary.dist}
\begin{split}
\mathbb{P}(\vv{\boldsymbol{X}}\in \mathcal{A}_c) & = \int_{\mathcal{A}_c} \pi_c(\vv{x}) \, d\vv{x},\\
\frac{\mathbb{P}(\vv{\boldsymbol{X}}\in \mathcal{A}(S, \vv{d}_S) )}{ \mathbb{P}(\vv{\boldsymbol{X}}_S = \vv{d}_S)}&
=  \int_{\vv{u} \in \bar{\mathcal{A}}(S, \vv{d}_S)} \phi_{S, \vv{d}_S}(\vv{u}) \, d\vv{u},
   \end{split}
\end{equation}
in which $\bar{\mathcal{A}}(S, \vv{d}_S)
 = \{ \vv{a}_{S^c} : \vv{a}\in \mathcal{A}(S, \vv{d}_S) \}$,  and let $\pi_c : \mathbb{R}^p \to [0,\infty)$ and $\phi_{S,\vv{d}_S} : \mathbb{R}^{\texttt{\#} S^c} \to [0,\infty)$ denote the corresponding conditional density functions. For notational completeness, the second integral in \eqref{stationary.dist} equals 1 when $S^c = \emptyset$, in which case $\phi_{S, \vv{d}_S} \equiv 1$. We further define the extended conditional density $\pi_{S, \vv{d}_S} : \mathbb{R}^p \to [0, \infty)$ such that $\pi_{S, \vv{d}_S}(\vv{x}) = \phi_{S, \vv{d}_S} (\vv{x}_{S^c})$ for all $\vv{x} \in \mathbb{R}^p$, rendering it invariant to the discrete coordinates indexed by $S$.

\subsection{Bayesian Network Distribution Regression}\label{Sec2.2}


We employ a discretization strategy to learn the mixed-type, multivariate stationary distribution.
First, each coordinate is partitioned into $m_j$ bins and the one-hot vectors \(\vv{\boldsymbol{H}}_j = (\boldsymbol{H}_{j1}, \dots, \boldsymbol{H}_{jm_j})^{\top}\) encodes which bin $\mathbf{X}_{j}$ lies in.
Note that the $m_j$th bin is reserved to represent the tails
\(I_j \setminus [-\delta_n, \delta_n]\), where \(I_j\) denotes the support of \(\boldsymbol{X}_j\) with  $\mathbb{P}(\boldsymbol{X}_j \in I_j) = 1$, and the remaining $(m_j - 1)$ bins form a partition of \( I_j \cap [-\delta_n, \delta_n] \).
In addition, we also treat the discrete points in $D_j$ as independent bins.
Then, the Cartesian products of the bins in each coordinate constitute the resulting $p$-dimensional hypercubes, which form a partition of the feature support.
An example is illustrated in the right panel of Figure~\ref{fig:bins}.
These hypercubes are used in our method (to be introduced below) through the one-hot encodings. For now, we treat them as given. The optimal bin sizes are discussed in Section~\ref{Sec5.c}, and we also tune them using some practical methods in Sections~\ref{Sec4} and \ref{Sec5}.

We now introduce the Bayesian network representation~\citep{jordan1999introduction, bengio1999modeling} which is fundamental to our method. Specifically, for every \( (\vv{h}_1, \dots, \vv{h}_p) \in \{0,1\}^{\sum_{l=1}^p m_l} \), where each \( \vv{h}_l \) is a one-hot vector, it holds that
\begin{align} \label{band}
\begin{split}
    &\mathbb{P}\big( \bigcap_{j=1}^p \{\vv{\boldsymbol{H}}_j = \vv{h}_j\} \big) \\ 
    =&  \prod_{j=1}^p \mathbb{P}\big( \vv{\boldsymbol{H}}_j = \vv{h}_j \ \big| \  \bigcap_{l=1}^{j-1} \{\vv{\boldsymbol{H}}_l = \vv{h}_l\} \big) \\
     =:& \prod_{j=1}^p e_{j,\omega_j(\vv{h}_j)}(\vv{h}_{j-1}, \dots, \vv{h}_1), 
\end{split}
\end{align}
where \( \omega_j : \{0,1\}^{m_j} \to \{1,\dots,m_j\} \) satisfies \( \omega_j(\vv{h}_j) = q \) if the $q$th coordinate of $\vv{h}_j$ is 1. For notational completeness, we write $\mathbb{P}( \vv{\boldsymbol{H}}_1 = \vv{h}_1 \ | \  \vv{\boldsymbol{H}}_0 = \vv{h}_0 ) = \mathbb{P}( \vv{\boldsymbol{H}}_1 = \vv{h}_1)$ and $e_{1l}(\vv{h}_0) = \mathbb{E}(\boldsymbol{H}_{1l})$. 
Now, we can estimate each $e_{jl}$ using high-dimensional regression techniques such as decision trees (\citealp{morgan1963problems}; see Section~\ref{Sec5.2c}), CART~\citep{breiman2017classification}, or Lasso~\citep{tibshirani1996regression}.
Let $\widehat{e}_{jl}$ denote the estimated probability function corresponding to $e_{jl}$.
Then the estimated probability $\mathbb{P}(\vv{\boldsymbol{X}} \in \mathcal{A})$ for a measurable set $\mathcal{A} \subset \mathbb{R}^p$ is then defined as
\begin{equation}
    \label{estimate.stationary.dist}
    \widehat{\mu}(\mathcal{A}) = \sum_{\vv{h}_{1:p}} 
\nu_{\vv{h}_{1:p}}(\mathcal{A} )
\prod_{j=1}^p \widehat{e}_{j,\omega_j(h_j)}(\vv{h}_{j-1 : 1}),
\end{equation}
where the sum is over all one-hot vectors $\vv{h}_j \in \{0,1\}^{m_j}$, and that $\nu_{\vv{h}_{1:p}}$ is a normalized Lebesgue measure with respect to the continuous coordinates of $\text{Cube}(\vv{h}_{1:p})$, the hypercube corresponding to the $\omega_j(h_j)$-th bin on the $j$-th coordinate, which has total mass one and zero outside the hypercube.
For example, for the hypercube whose second coordinate is fixed at $B$ in Figure~\ref{fig:bins}, its continuous coordinate is the first coordinate. In addition, if $\text{Cube}(\vv{h}_{1:p})$ is a discrete cube, then $\nu_{\vv{h}_{1:p}}(\mathcal{A}) = 1$ if $\text{Cube}(\vv{h}_{1:p}) \subseteq \mathcal{A}$ and $\nu_{\vv{h}_{1:p}}(\mathcal{A}) = 0$ otherwise. 
Since this method relies on the Bayesian network representation as well as modeling multiple probability functions via regressions, we call it the BAyesian Network Distribution (BAND) regression.

Many existing methods for discrete distribution modeling use Bayesian networks via maximizing the log-likelihood function, using shallow neural networks such as sigmoid belief networks~\citep{neal1992connectionist, jordan1999introduction}, or deep architectures such as PixelRNN~\citep{van2016pixel} and sparse transformers~\citep{child2019generating}. 
In contrast, the proposed BAND regression models the distribution by directly estimating each conditional probability function $e_{jl}(\vv{\boldsymbol{H}}_{j-1}, \dots, \vv{\boldsymbol{H}}_1) = \mathbb{E}\big(\boldsymbol{H}_{jl} \mid \vv{\boldsymbol{H}}_{j-1}, \dots, \vv{\boldsymbol{H}}_1\big)$. A key advantage of this formulation is that BAND explicitly handles variable sparsity structure using the well-known high-dimensional regression techniques, which provides provable guarantees that avoid the curse of dimensionality.

\section{Theoretical Foundations of Distribution Modeling}\label{Sec5.c}

\subsection{Convergence Rates in Total Variation Distance}\label{Sec4.2}

Our main goal in this subsection is to derive a convergence rate in total variation distance (Theorem~\ref{theorem2}) for BAND.
To this end, let us first introduce some high-level assumptions.

\begin{condi}\label{mean.regression}

There is $d_{\min} > 0$ such that $e_{jl}(\vv{h}_{j-1 : 1}) \ge d_{\min}$ 
if $\omega_1(\vv{h}_{1}) < m_1, \dots, \omega_{j-1}(\vv{h}_{j-1}) < m_{j-1}$ and $l< m_j$. Additionally, $\mathbb{P} (E^{\dagger}) \le \Delta$ for some $\Delta >0$, where
$E^{\dagger}  = \cap_{1\le j\le p} \cap_{ 1\le l< m_j} \cap_{\omega_q(\vv{h}_{q}) < m_q, 1\le q< j } \big\{ \big| e_{jl}(\vv{h}_{j-1 : 1}) -  \widehat{e}_{jl}(\vv{h}_{j-1: 1}) \big| \le \sqrt{T}\big\}$. 
\end{condi}

Condition~\ref{mean.regression} assumes a deviation upper bound $\sqrt{T}$ for the regression estimates, and a probability lower bound $d_{\min}$ for the regression functions.
These parameters can be estimated under more fundamental assumptions (see Corollaries~\ref{corollary1} and \ref{corollary2}), with $\sqrt{T}$ depending only mildly on $p$ thanks to the sparse Bayesian networks (Condition~\ref{sparsity}).

In Condition~\ref{conti} below, $\Theta^{\star} = \Theta$ if $\vv{\boldsymbol{X}}$ is discrete; otherwise, $\Theta^{\star} = \Theta \cup \{c\}$ includes an additional label $c$. Recall that $\pi_c$ is defined in~\eqref{stationary.dist} while $\pi_\theta$, $\theta \in \Theta$, are defined after~\eqref{stationary.dist}. 

\begin{condi}\label{conti}
There exists some $R>0$ such that for each $\theta \in \Theta^{\star}$ and every $\vv{x}$ with $\|\vv{x}\|_{\infty} \le R$, it holds that
$\pi_{\theta}(\vv{x})>0$, $\log \pi_{\theta}(\vv{x}) \in C^{1}(\vv{x})$, and
$\|\nabla \log \pi_{\theta}(\vv{x})\|_2 \le K_{1}\sqrt{p}\,R$ for some constant $K_{1} > 0$.
\end{condi}

A bounded gradient of the log-density (the \textit{score}) is a standard regularity assumption in the density estimation literature, including in score-based diffusion  models. 
Condition~\ref{conti} additionally accounts for the dependence on the feature dimension $p$. It can be verified that the (boundary adjusted) uniform, Student-$t$, and multivariate Gaussian distributions and their mixtures all satisfy Condition~\ref{conti}; see Section~\ref{example.distribution} for details. Alternatively, one may assume
$\|\nabla \log \pi_{\theta}(\vv{x})\|_{\infty} \le K_{1} R$, which implies Condition~\ref{conti} since $\|\nabla \log \pi_{\theta}(\vv{x})\|_{2}\le \sqrt{p}\|\nabla \log \pi_{\theta}(\vv{x})\|_{\infty}$.

Let $\{\delta_n\}$ be a positive sequence, and let $\varepsilon_n \ge 0$ denote the maximum diameter of the cubes in the partition of $[-\delta_n, \delta_n]^p$ (cubes as defined after \eqref{estimate.stationary.dist}). 
Now we are ready to state the main result, Theorem \ref{theorem2}, whose proof is provided in Section~\ref{proof.theorem1}.
Note that $(\Delta, d_{\min}, T, E^{\dagger}, \varepsilon, p)$ all depend on $n$; however, this dependence is omitted for brevity.

\begin{theorem}\label{theorem2}
 Set \(\widehat{e}_{jl}(\vv{h}_{j-1}, \dots, \vv{h}_1) = 0\) 
if \(\omega_q(\vv{h}_q) = m_q\) for some \(1 \le q < j\) or \(l = m_j\). 
For all large $n$, assume Condition~\ref{mean.regression}--\ref{conti} with $R = \delta_n$. 
Assume also $\varepsilon \le (\delta_n\sqrt{p}K_{1})^{-1}$ and $e p \sqrt{T} \leq d_{\min}$ for all large $n$.
Then, it holds for all large $n$ that, with probability at least \(1 - \Delta\),
{\small
\begin{equation*}
\sup_{\mathcal{A}\in \mathcal{R}^p}\bigl| \mathbb{P}(\vv{\boldsymbol{X}} \in \mathcal{A}) - \widehat{\mu}(\mathcal{A}) \bigr| \le  \frac{e p \sqrt{T}}{d_{\min}} + \varepsilon   \sqrt{p} \delta_n K_{1} e + Q_{\delta_n},
\end{equation*}
}where $Q_{\delta_n} = \mathbb{P}(\vv{\boldsymbol{X}}\not\in [-\delta_n, \delta_n]^p )$.
\end{theorem}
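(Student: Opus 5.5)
The plan is to split the total variation error into three pieces using an intermediate measure. Define $\mu^{\star}(\mathcal{A}) = \sum_{\vv{h}_{1:p}} \nu_{\vv{h}_{1:p}}(\mathcal{A}) \prod_{j=1}^p e^{\star}_{j,\omega_j(\vv{h}_j)}(\vv{h}_{j-1:1})$, where $e^{\star}_{jl}$ equals the true $e_{jl}$ except that it is set to zero whenever some earlier coordinate falls in a tail bin or $l = m_j$, exactly as $\widehat{e}_{jl}$ is truncated. Equivalently, $\mu^{\star}$ places mass $\mathbb{P}(\vv{\boldsymbol{X}}\in \text{Cube}(\vv{h}_{1:p}))$ on each cube inside $[-\delta_n,\delta_n]^p$, spread by $\nu_{\vv{h}_{1:p}}$, and nothing on cubes meeting the tails. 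The triangle inequality gives
\[
\sup_{\mathcal{A}}|\mathbb{P}(\vv{\boldsymbol{X}}\in\mathcal{A})-\widehat{\mu}(\mathcal{A})| \le \sup_{\mathcal{A}}|\mathbb{P}(\vv{\boldsymbol{X}}\in\mathcal{A})-\mu^{\star}(\mathcal{A})| + \sum_{\vv{h}_{1:p}}\Bigl|\prod_j \widehat{e}_{j\cdot} - \prod_j e^{\star}_{j\cdot}\Bigr|,
\]
because $\nu_{\vv{h}_{1:p}}(\mathcal{A})\in[0,1]$. I will bound the first term by the discretization error plus $Q_{\delta_n}$, and the second by $ep\sqrt{T}/d_{\min}$ on the event $E^{\dagger}$, which has probability at least $1-\Delta$. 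I read Condition~\ref{mean.regression} as intending $\mathbb{P}((E^{\dagger})^c)\le\Delta$.

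\textbf{Regression error.} Fix a non-tail $\vv{h}_{1:p}$, meaning every $\omega_j(\vv{h}_j)<m_j$. On $E^{\dagger}$ and by Condition~\ref{mean.regression}, each factor satisfies $\widehat{e}_j = e_j(1+r_j)$ with $|r_j|\le \sqrt{T}/d_{\min}$. Then
\[
\Bigl|\prod_j(1+r_j)-1\Bigr| \le (1+\sqrt{T}/d_{\min})^p - 1 \le e^{p\sqrt{T}/d_{\min}} - 1 \le e\,p\sqrt{T}/d_{\min},
\]
where the last step uses $p\sqrt{T}/d_{\min}\le 1/e\le 1$ (from $ep\sqrt{T}\le d_{\min}$) and $e^x-1\le ex$ for $x\in[0,1]$. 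Multiplying by $\prod_j e_j$ and summing over $\vv{h}_{1:p}$ gives at most $ep\sqrt{T}/d_{\min}$, since the true probabilities sum to at most one. Tail cubes contribute zero to this sum, because both $\widehat{e}$ and $e^{\star}$ vanish there.

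\textbf{Discretization and tail error.} For $\mathcal{A}$, the difference $\mathbb{P}(\vv{\boldsymbol{X}}\in\mathcal{A})-\mu^{\star}(\mathcal{A})$ splits into two parts.
\begin{itemize}
\item The mass outside $[-\delta_n,\delta_n]^p$ contributes at most $Q_{\delta_n}$.
\item Inside, using the decomposition \eqref{stationary.dist}, each cube lies in a single stratum $\theta\in\Theta^{\star}$, since discrete points are their own bins. Within a cube the true law is $\pi_\theta$ restricted to the continuous coordinates, and $\mu^{\star}$ replaces it by its average. Discrete cubes contribute no error.
\end{itemize}
For a continuous-type cube of diameter at most $\varepsilon$, Condition~\ref{conti} with $R=\delta_n$ and the mean value theorem give $|\log\pi_\theta(\vv{x})-\log\pi_\theta(\vv{y})|\le K_1\sqrt{p}\,\delta_n\varepsilon\le 1$. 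Hence $|\pi_\theta(\vv{x})/\bar\pi_\theta - 1|\le e^{K_1\sqrt{p}\delta_n\varepsilon}-1\le eK_1\sqrt{p}\delta_n\varepsilon$, where $\bar\pi_\theta$ is the cube average. Integrating this relative error against the true mass of the cube and summing over cubes gives at most $\varepsilon\sqrt{p}\delta_n K_1 e$ uniformly in $\mathcal{A}$.

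The main obstacle is this discretization step for mixed types. One must check that the extended densities $\pi_{S,\vv{d}_S}$ are invariant in the discrete coordinates, so that the gradient bound on the continuous coordinates governs the within-cube variation. One must also check that the cube geometry (with convexity of the cube for the mean value theorem) is compatible with the decomposition \eqref{stationary.dist}. The conditioning weights $\mathbb{P}(\vv{\boldsymbol{X}}_S=\vv{d}_S)$ factor out cleanly, so summing the relative errors stays bounded by the total mass.
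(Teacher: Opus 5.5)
Your proposal is correct and follows essentially the same route as the paper's proof. The paper likewise uses an intermediate population measure built from the true $e_{jl}$ and bounds three pieces separately: the bias by a within-cube relative error derived from the log-score condition, the variance by a product-perturbation bound on $E^{\dagger}$, and the tail by $Q_{\delta_n}$. Your bound $\prod_j(1+|r_j|)-1\le e^{p\sqrt{T}/d_{\min}}-1$ replaces the paper's telescoping identity but yields the same constant, and your reading of Condition~\ref{mean.regression} as $\mathbb{P}((E^{\dagger})^c)\le\Delta$ is what the paper's proof actually uses.
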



\begin{remark}
    The first two terms in the upper bound in Theorem~\ref{theorem2} correspond to estimation variance and bias, respectively.
    In Section~\ref{proof.theorem1}, we also provide a similar result for conditional distribution.     
\end{remark}

One major implication of Theorem \ref{theorem2} is that BAND can consistently estimate high-dimensional distributions, which allows $p$ to grow polynomially with the sample size $n$.
To illustrate, in the following subsections we consider the special cases of continuous and discrete distributions in Corollaries~\ref{corollary1} and~\ref{corollary2}, respectively.
Across both cases, we achieve polynomial convergence rates in $n$, allowing the feature dimension $p$ to grow polynomially with the sample size. In contrast, existing theoretical results achieving optimal rates~\citep{cevid2022distributional, liu2023convergence, oko2023diffusion, zhang2024minimax, biroli2024kernel} typically require $p \le \log n$ for total variation consistency.

\subsection{Application to Continuous Multivariate Time series}
\label{Sec5.2c}



Suppose $\{\vv{\boldsymbol{X}}_t\}_{t=1}^{n}$ is a continuous stationary process with $\Theta = \emptyset$ (that is, $\int_{\mathbb{R}^p} \pi_c(\vv{x})\, d\vv{x} = 1$) and feature supports $I_1 = \cdots = I_p = \mathbb{R}$. Let 
$\vv{\boldsymbol{H}}_{tj} = (\boldsymbol{H}_{tj1}, \dots, \boldsymbol{H}_{tjm_j})^{\top}$, $t \in [1:n]$, denote the sample counterparts of $\vv{\boldsymbol{H}}_j$, i.e., the corresponding discretized sample constructed as in Section~\ref{Sec2.2}. 
To illustrate how BAND adapts to the sparse Bayesian network, consider, for each $1 \le j \le p$ and $1 \le l < m_j$, a sparse tree model
{\small\begin{equation}
    \label{tree}
    \widehat{e}_{jl}(\vv{h}_{j-1}, \dots, \vv{h}_1) 
    = \frac{\sum_{t=1}^n \boldsymbol{H}_{tjl} \prod_{q \in S_j} \boldsymbol{1}\{\vv{\boldsymbol{H}}_{tq} = \vv{h}_q\}}
           {\sum_{t=1}^n \prod_{q \in S_j} \boldsymbol{1}\{\vv{\boldsymbol{H}}_{tq} = \vv{h}_q\}}
\end{equation}}%
for $(\vv{h}_{1}, \dots, \vv{h}_p) \in \{0,1\}^{m_1} \times \dots \times \{0,1\}^{m_p}$ with one-hot $\vv{h}_j$ and $\omega_j(\vv{h}_j) < m_j$; otherwise, the tree model prediction is defined to be zero. Here, $S_1, \dots, S_p$ are relevant sets given by Condition~\ref{sparsity} below. 


\begin{condi}[Sparse Bayesian Networks]\label{sparsity}
    \textnormal{(a)} For each $j$, there is some $S_j\subset \{1, \dots, j-1\}$ such that $\vv{\boldsymbol{H}}_j$ is independent of $(\vv{\boldsymbol{H}}_l, l\not\in S_j, l<j)$ conditional on $(\vv{\boldsymbol{H}}_l, l\in S_j)$. \textnormal{(b)} There are $S_{1}, \dots, S_p$  such that $ S_{j} \subset \{1, \dots, j - 1\}$ and that $\vv{\boldsymbol{X}}_{S_{j}\cup\{j\}}$ is independent of  $\vv{\boldsymbol{X}}_{\{1, \dots, j-1\}\backslash S_{j}}$.
\end{condi}

Condition~\ref{sparsity} is trivially satisfied when the Bayesian network is not sparse (\(S_j = \{j-1,\dots,1\}\)). Condition~\ref{sparsity}(a) requires the discretized one-hot vectors to satisfy an autoregressive conditional independence property, which does not automatically follow from $\boldsymbol{X}_1, \dots, \boldsymbol{X}_p$ having the same property. However, it is a natural condition in many applied fields, such as economics, finance, and engineering, where continuous variables are routinely discretized. For example, \citet{tauchen1986finite} shows that increasingly fine partitions yield accurate approximations of continuous processes by finite-state Markov chains. 
Additionally, Example~\ref{k_modals_unif} in Section \ref{Sec4} illustrates that, with sufficiently many discretization splits, Condition~\ref{sparsity}(a) holds for some mixtures of uniform distributions. 
More examples can be found in~\citep{vandermeulen2024breaking}.
Alternatively, Condition~\ref{sparsity}(b) assumes there are multiple groups of mutually independent variables, which is another form of sparse Bayesian networks.

With Condition~\ref{sparsity}, BAND combined with the sparse tree models \eqref{tree} can potentially attain a faster rate of convergence; see Corollaries~\ref{corollary1} and~\ref{corollary2}, which are derived assuming these relevant sets are known. A formal analysis of recovering these sets via sparse learning is left for future work.
In practice, the relevant sets $S_j$ are unknown and can be estimated using sparse regression methods such as Lasso~\citep{tibshirani1996regression},
CART~\citep{breiman2017classification}, and CatBoost~\citep{prokhorenkova2018catboost}.


Finally, we need a technical, though standard, assumption on the weak dependence in the data in order to establish concentration inequalities.

\begin{condi}[Strong Mixing]
    \label{alpha}    
    There are constants \( \gamma_0, \gamma_1 > 0 \) such that the sequence \( \{ (\boldsymbol{X}_{tl}, l\in S_j\cup\{j\})\}_{t} \) is strongly mixing with mixing coefficients satisfying $\alpha(k) \le \gamma_0 e^{-\gamma_1 k}$ for all $k \ge 1$ and all $1\le j\le p$, with $S_j$ specified in Condition~\ref{sparsity}.
\end{condi}
A formal definition of the strong mixing property is provided in Section~\ref{SecB.1}. Condition~\ref{alpha} requires that each sub-process $\{ (\boldsymbol{X}_{tl}, l \in S_j \cup \{j\}) \}_t$ is strong mixing, allowing linear AR, threshold AR, bounded nonlinear AR, ARCH, GARCH, and VAR processes~\citep{an1996geometrical, tsay2005analysis}. 
This is slightly weaker than requiring the full process to be strongly mixing. Now we can state Corollary~\ref{corollary1}, whose proof is given in Section~\ref{SecA.2}.

\begin{corollary}
\label{corollary1}    
Let \(\delta_n > 0\) be arbitrary.
Assume Condition~\ref{conti} (with \(R = \delta_n\)) and Condition~\ref{alpha}, and either Condition~\ref{sparsity}(a) or \ref{sparsity}(b) with \(\max_{1 \le j \le p} \#S_j \le s_0\) for some constant \(s_0 \ge 0\).
In addition, suppose the joint density of \(\vv{\boldsymbol{X}}_J\) is uniformly bounded above and below whenever \(\#J \le s_0 + 1\). 
If the splitting scheme of the bins satisfies \(\max_{1 \le j \le p} m_j \le n\), \(\sup_{n \ge 1}(L_{2n}/L_{1n}) < \infty\), \(L_{1n}^{-1-s_0} \le n\), and \(\lim_{n \to \infty} L_{2n} = 0\), where \(L_{1n}\) and \(L_{2n}\) denote the minimum and maximum bin lengths, and if $pn^{-1/2} (\log{n})^5 L_{2n}^{-\frac{1}{2} - \frac{s_0}{2} - b_0}\le 1$ and $p L_{2n} \delta_n =o(1)$ for some $b_0 > 0$, then with probability tending to one, 
$\sup_{\mathcal{A} \in \mathcal{R}^p}\big| \mathbb{P}(\vv{\boldsymbol{X}} \in \mathcal{A}) - \widehat{\mu}(\mathcal{A}) \big|  \le  
        pn^{-\frac{1}{2}} (\log{n})^5 L_{2n}^{-\frac{1}{2} - \frac{s_0}{2} - b_0}  
        +  p L_{2n} \delta_n  \log{n}+ Q_{\delta_n}$.
\end{corollary}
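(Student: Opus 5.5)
We deduce Corollary~\ref{corollary1} from Theorem~\ref{theorem2} by checking its hypotheses with explicit choices of $d_{\min}$, $\sqrt{T}$, $\Delta$ and $\varepsilon$. We then show that the three terms of Theorem~\ref{theorem2} are dominated by the three terms claimed.

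\textbf{Bias term.} Since $\Theta=\emptyset$, the cubes partitioning $[-\delta_n,\delta_n]^p$ have diameter $\varepsilon\le\sqrt{p}\,L_{2n}$. The cubes relevant to the Bayesian network, however, only involve the coordinates in $S_j\cup\{j\}$. In the proof of Theorem~\ref{theorem2}, the bias arises from replacing $\pi_c$ by a constant on each cube, and that replacement costs $\exp(\varepsilon\sqrt{p}\delta_nK_1)-1$. Under Condition~\ref{sparsity} I would redo this bound coordinatewise to obtain a bias of order $p\,L_{2n}\delta_n K_1$, rather than using $\varepsilon\sqrt{p}\le pL_{2n}$ crudely. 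The assumption $pL_{2n}\delta_n=o(1)$ gives the smallness required in Theorem~\ref{theorem2}, namely $\varepsilon\le(\delta_n\sqrt pK_1)^{-1}$. For large $n$ the constant $K_1e$ is absorbed by the factor $\log n$.

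\textbf{Lower bound $d_{\min}$.} By Condition~\ref{sparsity}, $e_{jl}(\vv h_{j-1:1})$ equals $\mathbb P(\vv{\boldsymbol H}_j=\text{bin }l\mid \vv{\boldsymbol H}_{S_j}=\vv h_{S_j})$. This is a ratio of probabilities of cubes in dimension at most $s_0+1$. Because the joint densities of $\vv{\boldsymbol X}_J$ with $\#J\le s_0+1$ are bounded above and below, and $L_{2n}/L_{1n}$ is bounded, this ratio is at least $c\,L_{1n}\gtrsim L_{2n}$. Hence we may take $d_{\min}\asymp L_{2n}$.

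\textbf{Concentration of the tree estimator.} This is the main obstacle. Fix $j$, $l$ and a cell $\vv h_{S_j}$. The denominator in \eqref{tree} is a sum of indicators with mean $n\,\mathbb P(\text{cell})\gtrsim nL_{1n}^{s_0}$, and the numerator has mean $\gtrsim nL_{1n}^{s_0+1}$. I would apply a Bernstein inequality for strongly mixing sequences with geometric $\alpha(k)$ (Merlevède--Peligrad--Rio type), using Condition~\ref{alpha} on the subprocess indexed by $S_j\cup\{j\}$. This gives, with probability $1-n^{-c}$, a relative error of order $(\log n)^{2}\,(nL_{1n}^{s_0+1})^{-1/2}$ for numerator and denominator. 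Consequently
\[
|\widehat e_{jl}-e_{jl}|\lesssim (\log n)^{2}\,n^{-1/2}L_{2n}^{-s_0/2}
\]
up to the variance factor $e_{jl}^{1/2}$. A union bound over at most $p\,n^{s_0+1}$ triples $(j,l,\text{cell})$, with $m_j\le n$, costs only further $\log n$ factors; this gives $\Delta\to0$. The requirement $L_{1n}^{-1-s_0}\le n$ guarantees that every cell has enough observations for the Bernstein regime to apply. The slack $L_{2n}^{-b_0}$ absorbs the remaining constants and logarithms, so I would set $\sqrt T=(\log n)^5 n^{-1/2}L_{2n}^{1/2-s_0/2-b_0}$.

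\textbf{Assembly.} With these choices,
\[
\frac{ep\sqrt T}{d_{\min}}\lesssim p\,n^{-1/2}(\log n)^5L_{2n}^{-1/2-s_0/2-b_0}.
\]
The hypothesis that this quantity is at most $1$, together with the slack in $b_0$, yields $ep\sqrt T\le d_{\min}$ for large $n$. Theorem~\ref{theorem2} then applies with probability $1-\Delta\to1$, and combining the three terms gives the stated bound.
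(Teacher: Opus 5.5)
Your proposal is correct and follows the paper's proof essentially step for step ($d_{\min}\asymp L_{1n}$ from the two-sided cell-probability bounds, a Merlev\`ede--Peligrad--Rio Bernstein inequality for the numerator and denominator of \eqref{tree} plus a union bound to verify Condition~\ref{mean.regression}, $\varepsilon\le\sqrt{p}\,L_{2n}$ for the bias, then Theorem~\ref{theorem2}). A few minor remarks: no coordinatewise redo of the bias is needed, since $\varepsilon\sqrt{p}\,\delta_nK_1e\le pL_{2n}\delta_nK_1e$ already gives the claimed term; the paper spends the slack $L_{2n}^{-b_0}$ on the long-run variance $v^2\lesssim L_{1n}^{-\iota(s_0+1)/(2+\iota)}$ obtained from Rio's covariance inequality, a bound you should state explicitly (for indicators, $|\mathrm{Cov}|\le\min\{\alpha(k),\mu\}$ gives only a logarithmic loss, which fits your accounting); and your $\sqrt{T}$ with the full $(\log n)^5$ leaves a stray constant in $ep\sqrt{T}/d_{\min}$, which the paper avoids by putting only $(\log n)^4$ in $\sqrt{T}$.
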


Note that Corollary~\ref{corollary1} allows the feature dimension $p$ to grow polynomially with the sample size, while maintaining polynomial convergence rates.  Leveraging the sparsity in Condition~\ref{sparsity}, we avoid the curse of dimensionality and achieve a convergence rate far faster than the existing results for adaptive partitioning~\citep{liu2023convergence}, distributional random forests~\citep{cevid2022distributional}, and score-based diffusion models~\citep{oko2023diffusion, zhang2024minimax}, when \(p \gg \log n\). In Example \ref{rates} below, we explicitly calculate the polynomial convergence rate with a carefully selected bin size. 

\begin{exmp}\label{rates}
Let $L_{2n} = n^{-1/(3 + s_0 + 2b_0)}$ and $\delta_n = n^{\epsilon_1}$ for some $\epsilon_1 > 0$, assuming $\pi_c$ satisfies sufficiently strong polynomial moment conditions so that $Q_{\delta_n}$ is negligible. Then the convergence rate in Corollary~\ref{corollary1} is $p \, n^{-1/(3 + s_0 + 2b_0) + \epsilon_1} (\log n)^5$.
\end{exmp}

    The rate $n^{-1/(2+p)}$ is the established optimal for multivariate histogram estimators \citep{beirlant1998lrerror}. By comparison, the rate derived in Example~\ref{rates} circumvents the curse of dimensionality by replacing the dimension $p$ in the exponent with the sparsity-related term $1 + s_0 \le p$ (where $s_0 = 0$ corresponds to the case $S_j = \emptyset$ for all $j$), with an additional linear factor of $p$. \citet{vandermeulen2024breaking} obtained convergence rates comparable to that in Example~\ref{rates}, specifically by replacing the dimension $p$ in the exponent with a sparsity parameter under a condition similar to Condition~\ref{sparsity}. However, their results depend on densities supported exclusively on the unit hypercube $[0, 1]^p$. Such confined support is generally incompatible with time series data, which typically requires a broader domain, thus limiting their findings to independent data settings.

Besides sparse Bayesian networks, prior literature on copula models~\citep{nagler2016evading, oh2017modeling} have considered other dependence structures for multivariate distribution modeling, such as the simplified vine copulas.
However, those assumptions are rather restrictive, which, for example, exclude some multi-modal mixture distributions.
In contrast, the sparse Bayesian network condition employed here has greater flexibility.

\subsection{Application to High-Dimensional Discrete Data}\label{Sec3.3}

In this subsection, we consider a discrete process $\vv{\boldsymbol{X}}_t$ whose stationary distribution satisfies $\mathbb{P}(\boldsymbol{X}_j \in D_j) = 1$ with $D_j = \{1, \dots, m_j - 1\}$, and define $I_j := D_j \cup \{m_0\}$, where $m_0 = \max_{1 \le j \le p} m_j$ and $\delta_n = m_0 - 1$ are assumed to be constant. Then, by the construction of the bins introduced in Section \ref{Sec2.2}, $\boldsymbol{X}_j = l$ if and only if $\boldsymbol{H}_{jl} = 1$ for $l \in [1:(m_j - 1)]$. Here, all cubes here are discrete cubes, with diameter $\varepsilon =0 $.
Corollary~\ref{corollary2} builds on the same tree model~\eqref{tree}; see Section~\ref{proof.corollary2} for the proof.

\begin{corollary}
\label{corollary2}    
 Let constants \( s_0 \ge 0 \) and \( \gamma_3, \gamma_4, \gamma_5 > 0 \) be given.
Assume Condition~\ref{sparsity}(a) with \(\max_{1 \le j \le p} \texttt{\#}S_j \le s_0\), Condition~\ref{alpha}, and that \(\max_{1 \le j \le p} m_j \le m_0\) and $p n^{-1/2} (\log n)^5\le 1$.
In addition, assume \(\gamma_4 \gamma_3^{\texttt{\#}S} \le \mathbb{E}(\prod_{j \in S}\boldsymbol{H}_{jl_j}) \le \gamma_5 \gamma_3^{\texttt{\#}S}\) for all \(1 \le l_j < m_j\) and \(S \subset \{1, \dots, p\}\) with \(\texttt{\#}S \le s_0 + 1\).
Then, with probability tending to one, 
$\sup_{\mathcal{A} \in \mathcal{R}^p}\big| \mathbb{P}(\vv{\boldsymbol{X}} \in \mathcal{A}) - \widehat{\mu}(\mathcal{A}) \big| \le p n^{-1/2} (\log n)^5$.
\end{corollary}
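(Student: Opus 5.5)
We will deduce Corollary~\ref{corollary2} from Theorem~\ref{theorem2}. In the discrete setting every cube is a discrete cube, so $\varepsilon = 0$ and the bias term $\varepsilon\sqrt{p}\delta_n K_1 e$ vanishes. Since $\delta_n = m_0-1$ and $\mathbb{P}(\boldsymbol{X}_j\in D_j)=1$ with $D_j\subset[1,m_0-1]$, we also have $Q_{\delta_n}=0$. Condition~\ref{conti} holds trivially for discrete $\Theta^\star=\Theta$ with constant extended densities $\pi_\theta\equiv 1$ (zero score). It therefore suffices to verify Condition~\ref{mean.regression} with a constant $d_{\min}$, some $\sqrt{T}\lesssim n^{-1/2}(\log n)^{5}/(\text{const})$ and $\Delta\to 0$, together with $ep\sqrt{T}\le d_{\min}$. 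The bound $p n^{-1/2}(\log n)^5$ then follows after absorbing constants into the extra logarithmic slack.

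\emph{Lower bound $d_{\min}$.} Under Condition~\ref{sparsity}(a),
\[
e_{jl}(\vv{h}_{j-1:1})=\frac{\mathbb{E}\bigl(\boldsymbol{H}_{jl}\prod_{q\in S_j}\boldsymbol{H}_{q\omega_q(h_q)}\bigr)}{\mathbb{E}\bigl(\prod_{q\in S_j}\boldsymbol{H}_{q\omega_q(h_q)}\bigr)}.
\]
By the moment assumption with $\#S\le s_0+1$, this ratio is at least $\gamma_4\gamma_3^{\#S_j+1}/(\gamma_5\gamma_3^{\#S_j})=\gamma_3\gamma_4/\gamma_5=:d_{\min}$, a constant. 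Non-tail bins are exactly $l<m_j$, so the required index restrictions match.

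\emph{Deviation bound.} The tree estimator \eqref{tree} is a ratio $\widehat N/\widehat D$ of sample means of products of at most $s_0+1$ indicators of the sub-process $\{(\boldsymbol{X}_{tl},l\in S_j\cup\{j\})\}_t$. We will apply a Bernstein-type inequality for geometrically strongly mixing bounded sequences (e.g.\ Merlevède--Peligrad--Rio), available from Section~\ref{SecB.1}, to each numerator and denominator. This gives $|\widehat N-\mathbb{E}N|,|\widehat D-\mathbb{E}D|\le n^{-1/2}(\log n)^{c}$ with probability at least $1-n^{-K}$ for any fixed $K$. Because $\mathbb{E}D\ge\gamma_4\gamma_3^{s_0}$ is bounded below, the ratio deviation satisfies $|\widehat e_{jl}-e_{jl}|\le C n^{-1/2}(\log n)^{c}$. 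The number of distinct events is at most $p\,m_0^{s_0+1}$, polynomial in $n$ because $p\le n^{1/2}$. A union bound therefore yields $\Delta\le p m_0^{s_0+1}n^{-K}\to0$ with $\sqrt{T}=Cn^{-1/2}(\log n)^{c}$.

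\emph{Assembling.} The condition $ep\sqrt{T}\le d_{\min}$ follows from $pn^{-1/2}(\log n)^5\le1$ once $c<5$, since the spare logarithmic factor eventually dominates the constants $eC/d_{\min}$. The same slack gives $ep\sqrt{T}/d_{\min}\le pn^{-1/2}(\log n)^5$ for large $n$.

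\emph{Main obstacle.} The delicate step is the concentration argument. We must choose the Bernstein exponent for $\alpha$-mixing sequences, which typically costs a $(\log n)^2$ factor, and make sure the resulting log power stays strictly below $5$ uniformly over all $j,l$ and configurations. We also have to keep track of the event-inclusion direction in Condition~\ref{mean.regression}: the complement of the good event is what must have small probability.
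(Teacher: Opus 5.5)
Your proposal is correct and follows essentially the same route as the paper. Both reduce to Theorem~\ref{theorem2} with $\varepsilon=0$ and $Q_{\delta_n}=0$, and both obtain a constant $d_{\min}$ from the moment bounds via Condition~\ref{sparsity}(a). Both then verify the deviation part of Condition~\ref{mean.regression} with the Merlev\`ede--Peligrad--Rio Bernstein inequality and a union bound over polynomially many numerator/denominator averages. The differences are cosmetic: the paper rescales the products by the constant $\mathbb{E}(\prod_{q\in S_j}\boldsymbol{1}\{\cdot\})^{-1/2}$ and fixes the log power at $4$ with failure probability $\exp(-C(\log n)^2)$. You are also right that the probability bound in Condition~\ref{mean.regression} should be read as applying to the complement of $E^{\dagger}$.
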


In general, the minimax optimal rate for estimating discrete distributions without sparsity is $O(\sqrt{\text{Support Size} / n})$ \citep{Han2015minimax}. 
Take a \(p\)-dimensional vector of independent Bernoulli variables as an example. The support size is \(2^p\), which scales poorly with $p$ (in fact, the minimax rate is much slower when the support size increases with $n$). In comparison, Corollary~\ref{corollary2} offers a substantially faster convergence rate that scales linearly with \(p\) by incorporating the sparse Bayesian network structure.

\section{Simulation Experiments}\label{Sec4}

\subsection{Simulation Setup}\label{Sec4.1}

In this section, we first examine the finite-sample performance of BAND when applied to i.i.d.~synthetic data generated from bimodal distributions described in Examples~\ref{k_modals}--\ref{k_modals_unif} below. 
Multimodal distributions arise in many applications, from discrete mixtures in genetics~\citep{friedman2000using} and text classification~\citep{juan2002use} to continuous regime-switching models~\citep{tauchen1986finite, hamilton1989new} and clustering~\citep{melnykov2010finite}, but learning multimodal laws is difficult in high dimensions.
In the following, let $\vv{\mu}_1 = (0,\dots,0)^{\top}$ and $\vv{\mu}_2 = (r,\dots,r)^{\top}$, where $r \ge 0$.

\begin{exmp}[Gaussian mixture] \label{k_modals}
$\vv{\boldsymbol{X}} \sim \sum_{i=1}^{2} \frac{1}{2} \mathcal N_p(\vv{\mu}_i, I_p)$.
\end{exmp}
\begin{exmp}[Uniform mixture] \label{k_modals_unif} 
Let $\vv{\boldsymbol{X}} \sim \sum_{i=1}^{2} \frac{1}{2} \mathcal{U}_p([-\frac{3}{2},$ $\frac{3}{2}]^p$ $+ \vv{\mu}_i)$, where $\mathcal{A} + \vv{v} = \{\vv{x} + \vv{v} \mid \vv{x}\in\mathcal{A} \}$ for $\mathcal{A} \subset \mathbb{R}^p$ and $\vv{v}\in \mathbb{R}^p$.
\end{exmp}

Note that Examples~\ref{k_modals}--\ref{k_modals_unif} satisfy the sparsity Condition~\ref{sparsity}(a) with $S_j = \emptyset$ for $r=0$. 
Meanwhile, for Example~\ref{k_modals_unif} with $r > 3$, Condition~\ref{sparsity}(a) is met with $S_j = \{j-1\}$, provided there are sufficiently many evenly-spaced splits and $\delta_n \ge 3/2 + r$, resulting in two well-separated modes (see Figure~\ref{fig:all_models}); see Section~\ref{proof.sec4.1} for details.  
In general, however, the sparsity condition does not hold, creating challenges for BAND in learning the distribution. 
    
Next, we consider the following block-wise graphical Gaussian distribution. 
\begin{exmp}\label{graphical_gaussian}
Partition $\{1,\dots,p\}$ into $m$ equal-sized blocks $B_1,\dots,B_m$. 
The data follow $\vv{\boldsymbol{X}}_{B_k} \sim \mathcal{N}_{\frac{p}{m}} (\vv{0}, (1-\rho) I_{\frac{p}{m}} + \rho\, \vv{1}\vv{1}^{\top} )$, drawn independently for $k = 1, 2, \ldots, m$, 
where $\vv{1}$ denotes a vector of ones.
\end{exmp}

Note that Condition~\ref{sparsity}(b) holds in Example~\ref{graphical_gaussian}, where a larger $m$ indicates a higher degree of sparsity.
The sparsity pattern is not known to the practitioner.
Nevertheless, as we will see later, BAND can successfully adapt to the sparse structure and estimate the underlying distribution well.

\subsubsection{Implementation and Benchmark Methods}\label{method}
Here we briefly discuss the implementation details and introduce some benchmark methods.

\noindent {\bf BAND}: For continuous variables, as in our applications, we recommend using 
$\lceil \texttt{factor} \times n^{1/3} \rceil$ equally-spaced breaks over the sample range of each coordinate, following the classical cube-root rule for binning~\citep{scott1979optimal}. 
We use $\texttt{factor} =2$ in this section.
The mean regression models are fitted using CatBoost~\citep{prokhorenkova2018catboost} with default parameters due to its minimal hyperparameter tuning \citep{mcelfresh2023neural}. 
When sampling, BAND draws the $j$th variable from a hypercube determined by an $m_{j}$-dimensional probability vector $\vv{p}$ conditional on the preceding $j-1$ variables (see Section~\ref{sampleing}). 
To reduce instability, we introduce a \textit{sampling probability threshold factor} (\texttt{SPTF} $\in [0, 0.4]$), zeroing entries of $\vv{p}$ below $\texttt{SPTF} \times \text{std}(\vv{p})$; similar techniques have been used in generative models for sampling stabilization~\citep{lu2025dpm}.

Since we use CatBoost as the sub-routine for BAND, the computational cost of training BAND is proportional to training
\((p\, n^{1/3})^2\) CatBoost models on an \(n \times p\) feature matrix; see
\citet{prokhorenkova2018catboost} for CatBoost’s complexity. 
In typical economic time-series applications, the number of variables is modest (see Section~\ref{Sec5}),
so the computational cost is manageable.
Moreover, we found that the default learning rates in the range $[0.03, 0.05]$ for CatBoost perform as stably as automatic tuning.
Therefore, we simply fix the learning rate at 0.05 in the experiments.
Finally, note that tuning \texttt{SPTF} does not require retraining the model, which is computationally inexpensive.

\noindent {\bf Gaussian mixture models}~\cite{scrucca2016mclust}: 
Gaussian mixtures models are fitted using the \texttt{mclust} package in \textsf{R}, which implements the expectation--maximization algorithm with the number of components fixed at $K=2$.

\noindent {\bf Vine Copula}~\citep{nagler2016evading}: 
We implement the vine copula model via the \texttt{kdvine} package in \textsf{R}, which was proposed to estimate multivariate densities.

\noindent {\bf Normalizing Flow}~\citep{durkan2019neural}:
Normalizing flows are mature and well-validated generative models, which implicitly learns the multivariate density. Compared to the data-hungry score-based diffusion models~\citep{song2021scorebased, zhang2025training}, it offers a practical and strong benchmark.
We implement normalizing flows using the \texttt{nflows} library~\citep{nflows}.
To maintain robustness in our moderate-sample setting, we vary only the number of flow layers and use a standard 70/30 training-validation split for model selection, keeping all other hyperparameters at the library’s stable defaults.

\subsection{Results}\label{Sec4.2b}

To assess the performance of the employed methods, we use the Cram\'{e}r--Wold (CW) distance~\citep{cramer1936some}, a computable lower-bound surrogate for the total variation distance; details are provided in Section~\ref{l1measure}.
Tables~\ref{tab:gaussian_mixture}--\ref{tab:sparsity} 
show the resulting CW distances for various $(p, n, r)$ configurations.

For Example~\ref{k_modals}, \texttt{mclust} is correctly specified and therefore serves as the ideal benchmark method. As shown in Table~\ref{tab:gaussian_mixture}, \texttt{mclust} yields CW errors that are among the lowest. Nevertheless, BAND fares quite well, which significantly outperforms \texttt{nflows} in the high dimension case $p = 30$.

For Example~\ref{k_modals_unif} in Table~\ref{tab:uniform_mixture}, BAND yields the lowest CW errors when $n = 1000$.
In particular, when the two supports in the uniform mixture are well separated ($r = 4$), the CW error of BAND (around 0.055) is 75\% smaller compared to \texttt{kdvine} (around 0.20--0.25). 
Indeed, Figure~\ref{fig:all_models} shows that BAND accurately recovers the uniform mixture distribution, while \texttt{kdvine} is unsuitable for this case. \texttt{nflows} also incurs high CW errors despite its flexibility in distribution estimation. 
We note that although \texttt{mclust} is misspecified in this example, its CW errors in Table~\ref{tab:uniform_mixture} do not appear particularly poor.

{\renewcommand{\arraystretch}{0.9}

\begin{table}[h!]
\centering
\scriptsize
\caption{Average Cram\'{e}r--Wold distance over 30 trials for the four methods. Data are generated from the Gaussian mixture in Example~\ref{k_modals}. Standard deviations are in parentheses.}
\label{tab:gaussian_mixture}
\begin{tabular}{l l rrr}
\toprule
Method & $p$/$n$ & $r=0$ & $r=2$ & $r=4$ \\
\midrule
\multirow{4}{*}{BAND} 
 & 10/100 & 0.169 (0.02) & 0.158 (0.02) & 0.161 (0.03) \\
 & 10/1000 & 0.055 (0.01) & 0.070 (0.01) & 0.059 (0.01) \\
 & 30/100 & 0.191 (0.02) & 0.180 (0.02) & 0.181 (0.02) \\
 & 30/1000 & 0.062 (0.01) & 0.077 (0.01) & 0.062 (0.01) \\
\midrule
\multirow{4}{*}{mclust} 
 & 10/100 & 0.131 (0.03) & 0.117 (0.02) & 0.134 (0.02) \\
 & 10/1000 & 0.057 (0.01) & 0.055 (0.01) & 0.057 (0.01) \\
 & 30/100 & 0.153 (0.02) & 0.151 (0.03) & 0.147 (0.02) \\
 & 30/1000 & 0.067 (0.01) & 0.065 (0.01) & 0.063 (0.01) \\
\midrule
\multirow{4}{*}{nflow} 
 & 10/100 & 0.202 (0.04) & 0.182 (0.02) & 0.187 (0.02) \\
 & 10/1000 & 0.087 (0.01) & 0.090 (0.01) & 0.106 (0.02) \\
 & 30/100 & 0.409 (0.05) & 0.358 (0.05) & 0.280 (0.04) \\
 & 30/1000 & 0.087 (0.01) & 0.093 (0.01) & 0.122 (0.02) \\
\midrule
\multirow{4}{*}{kdvine} 
 & 10/100 & 0.166 (0.02) & 0.174 (0.03) & 0.219 (0.03) \\
 & 10/1000 & 0.076 (0.01) & 0.150 (0.01) & 0.198 (0.01) \\
 & 30/100 & 0.176 (0.02) & 0.202 (0.02) & 0.260 (0.02) \\
 & 30/1000 & 0.090 (0.01) & 0.183 (0.01) & 0.244 (0.01) \\
\bottomrule
\end{tabular}
\end{table}

\begin{table}[h!]
\centering
\scriptsize
\caption{Average Cram\'{e}r--Wold distance over 30 trials for the four methods. Data are generated from the uniform mixture in Example~\ref{k_modals_unif}. Standard deviations are in parentheses.}
\label{tab:uniform_mixture}
\begin{tabular}{l l rrr}
\toprule
Method & $p$/$n$ & $r=0$ & $r=2$ & $r=4$ \\
\midrule
\multirow{4}{*}{BAND} 
 & 10/100 & 0.182 (0.02) & 0.157 (0.02) & 0.147 (0.02) \\
 & 10/1000 & 0.059 (0.01) & 0.063 (0.01) & 0.054 (0.01) \\
 & 30/100 & 0.195 (0.02) & 0.184 (0.02) & 0.164 (0.02) \\
 & 30/1000 & 0.065 (0.01) & 0.069 (0.01) & 0.059 (0.01) \\
\midrule
\multirow{4}{*}{mclust} 
 & 10/100 & 0.137 (0.02) & 0.114 (0.02) & 0.114 (0.02) \\
 & 10/1000 & 0.088 (0.01) & 0.072 (0.01) & 0.061 (0.00) \\
 & 30/100 & 0.148 (0.01) & 0.147 (0.02) & 0.139 (0.02) \\
 & 30/1000 & 0.092 (0.00) & 0.076 (0.01) & 0.065 (0.01) \\
\midrule
\multirow{4}{*}{nflow} 
 & 10/100 & 0.188 (0.02) & 0.181 (0.03) & 0.185 (0.03) \\
 & 10/1000 & 0.103 (0.02) & 0.088 (0.01) & 0.113 (0.02) \\
 & 30/100 & 0.375 (0.05) & 0.318 (0.04) & 0.254 (0.05) \\
 & 30/1000 & 0.116 (0.01) & 0.102 (0.01) & 0.124 (0.02) \\
\midrule
\multirow{4}{*}{kdvine} 
 & 10/100 & 0.167 (0.03) & 0.167 (0.01) & 0.212 (0.01) \\
 & 10/1000 & 0.079 (0.01) & 0.161 (0.01) & 0.206 (0.01) \\
 & 30/100 & 0.175 (0.02) & 0.195 (0.01) & 0.257 (0.01) \\
 & 30/1000 & 0.089 (0.01) & 0.190 (0.01) & 0.251 (0.01) \\
\bottomrule
\end{tabular}
\end{table}
}

\begin{figure}[ht]
\centering


\subfloat[BAND]{
    \includegraphics[width=0.23\textwidth]{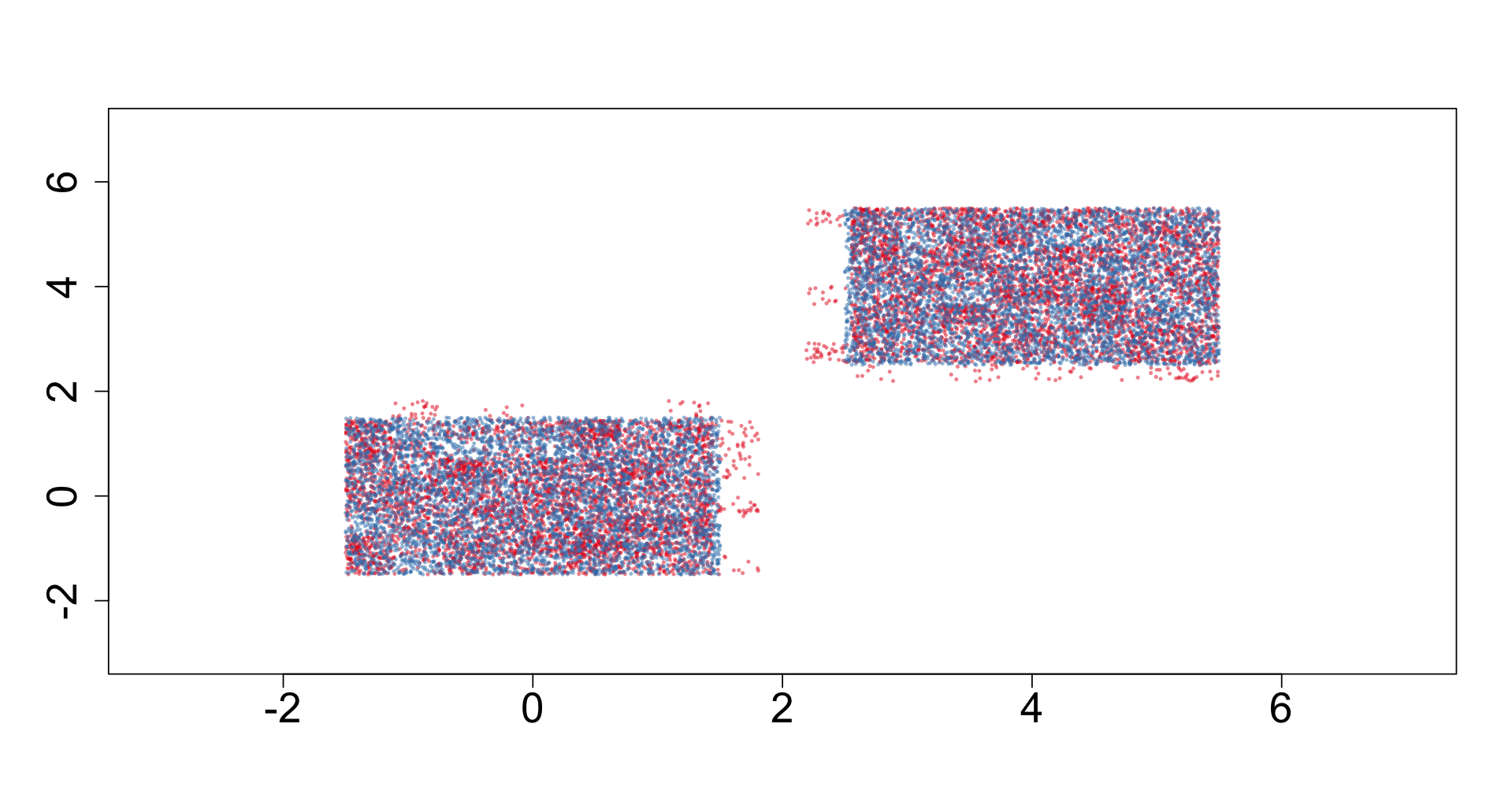}
}
\hfill
\subfloat[mcluster]{
    \includegraphics[width=0.23\textwidth]{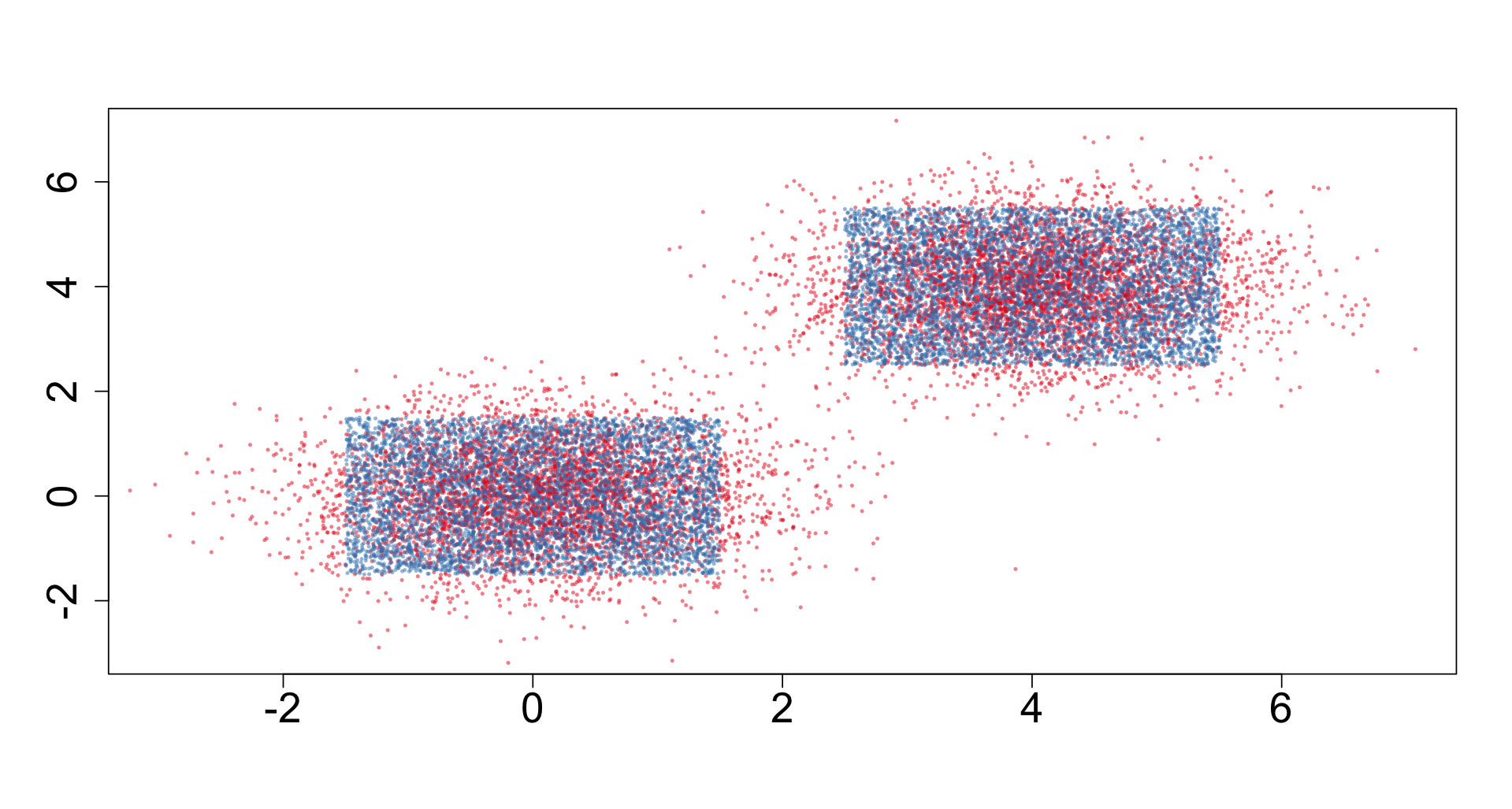}
}
\hfill
\subfloat[nflow]{
    \includegraphics[width=0.23\textwidth]{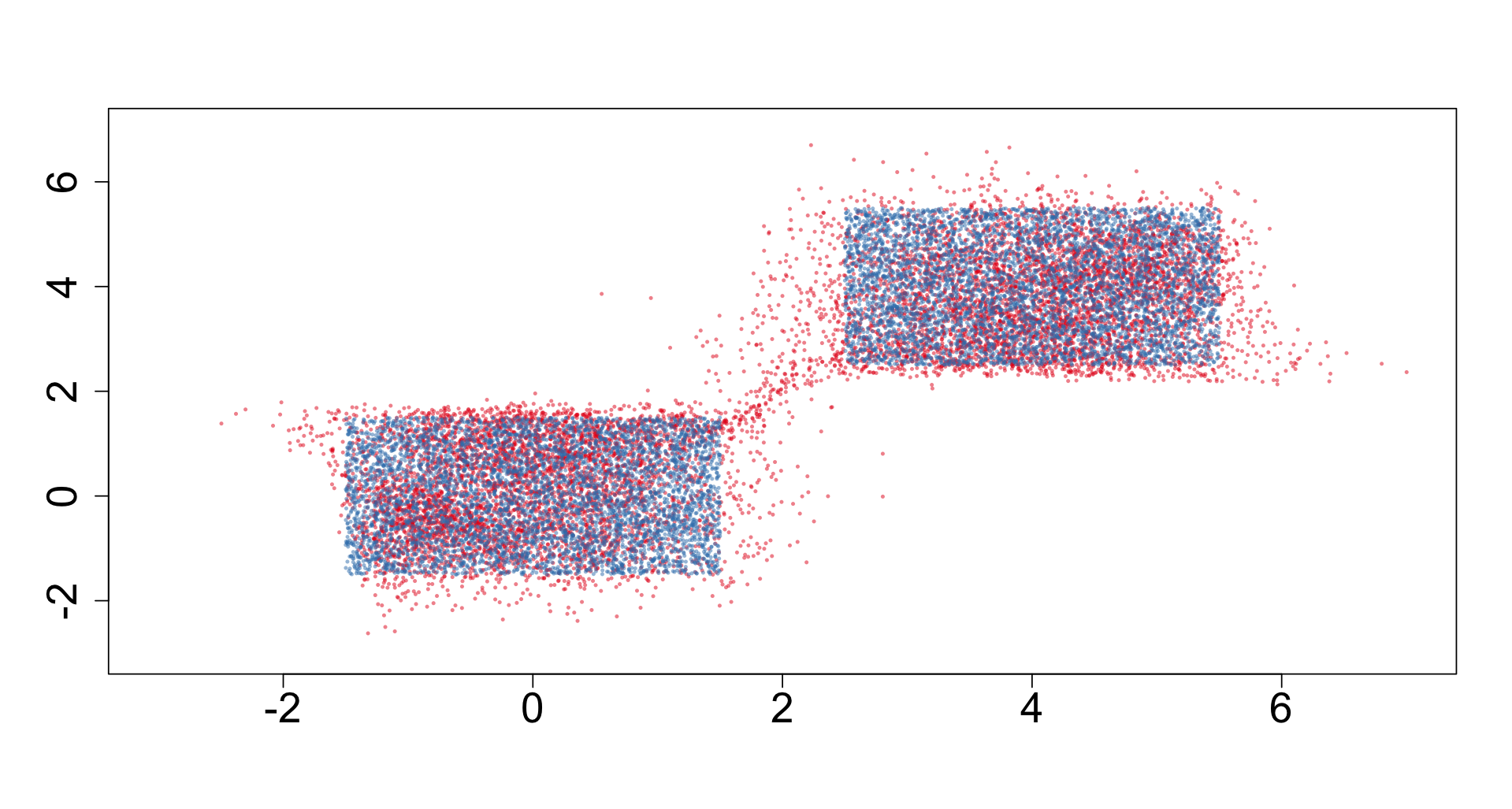}
}
\hfill
\subfloat[kdvine]{
    \includegraphics[width=0.23\textwidth]{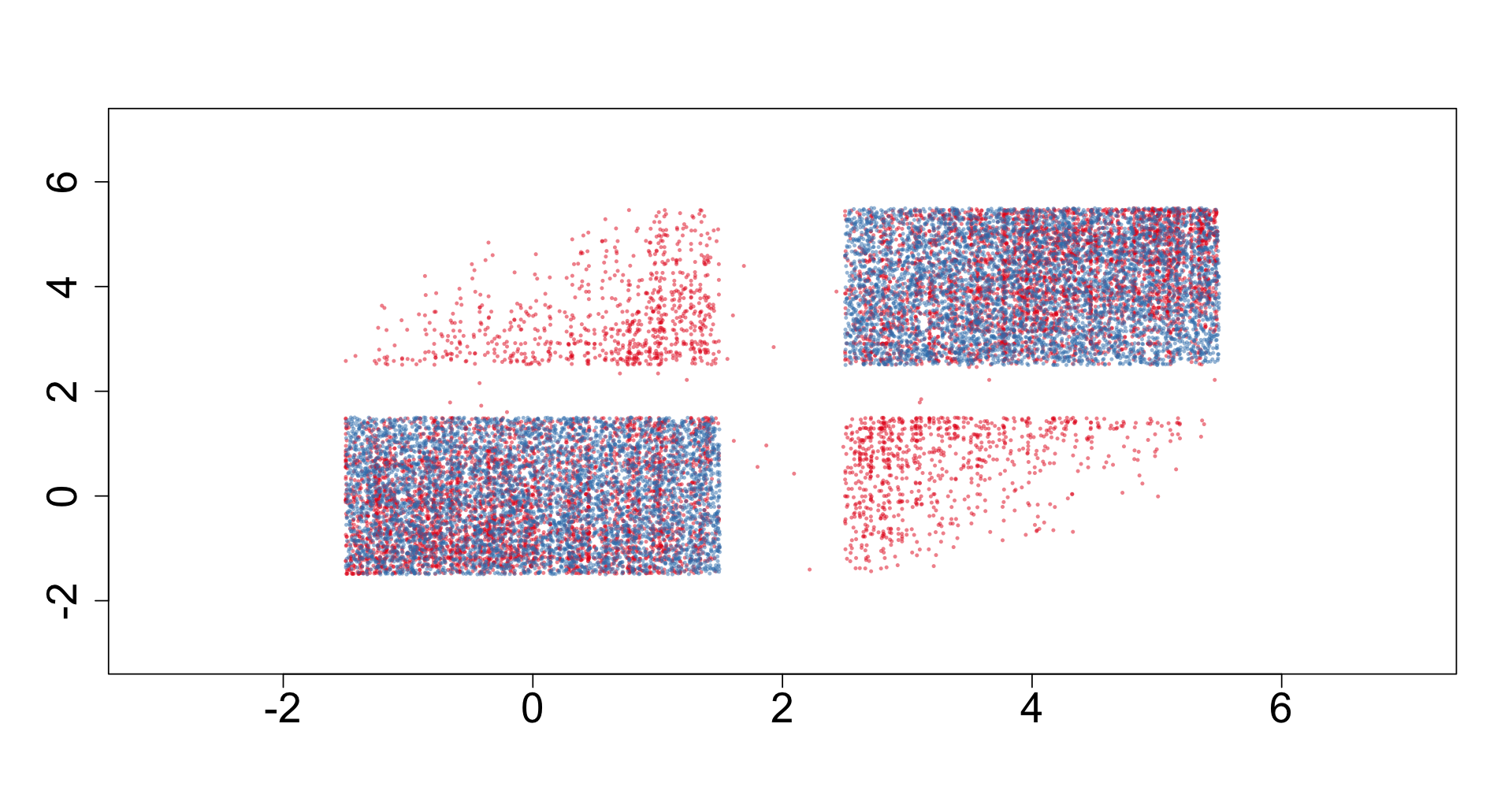}
}
\caption{Scatterplots of 10,000 empirical data points (blue) and 10,000 points sampled from the four fitted models (red). Data are drawn from Example~\ref{k_modals_unif} with $r = 4$. 
Each method is trained on $n=1000$ samples with $p=2$.}
\label{fig:all_models}
\end{figure}

Finally, Table~\ref{tab:sparsity} reports the CW errors for the graphical Gaussian distribution in Example~\ref{graphical_gaussian}.
We again use the correctly-specified \texttt{mclust} as the ideal benchmark.
Note that BAND successfully adapts to the underlying sparsity and achieves lower CW errors when there are many independent blocks among the $p = 30$ variables.
In contrast, the CW errors of \texttt{kdvine} and \texttt{nflows} remain the same regardless of the sparsity level.

{\renewcommand{\arraystretch}{0.9}
\begin{table}[h!]
\caption{Average Cram\'{e}r--Wold distance over 30 trials for the four methods. Data are generated from as in Example~\ref{graphical_gaussian}. We set $p$ = 30, $\rho = 1/2$, $n = 1{,}000$. Standard deviations are in parentheses.
}\label{tab:sparsity}
\scriptsize
\centering
\begin{tabular}{lcccc}
\toprule
\texttt{\#}\textnormal{blocks} ($m$) & 3 & 6 & 10 & 30 \\
\midrule
BAND & 0.082 (0.009) & 0.084 (0.007) & 0.077 (0.008) & 0.061 (0.007) \\
mcluster & 0.065 (0.007) & 0.064 (0.007) & 0.065 (0.008) & 0.065 (0.007) \\
nflow & 0.091 (0.011) & 0.094 (0.010) & 0.095 (0.013) & 0.095 (0.015) \\
kdvine & 0.084 (0.007) & 0.088 (0.009) & 0.087 (0.009) & 0.088 (0.007) \\\bottomrule
\end{tabular}
\end{table}}

\section{Application: Multivariate Forecasting Confidence Regions for Macroeconomic Time Series}\label{Sec5}

In this section, we apply BAND to construct (multivariate) forecasting confidence regions for U.S.~macroeconomic time series.
Macroeconomic forecasting is important but challenging, as economic time series often exhibit nonstationarity, nonlinear dynamics, and may contain substantial outliers.
We focus on three key monthly series, namely unemployment rate, federal funds rate, and inflation rate.
Before analysis, it is standard practice to remove unit root nonstationarity~\citep{mccracken2016fred}. 
We analyze $\text{UR}_t$, the first difference of the monthly U.S.~unemployment rate, $\text{FR}_t$, the first difference of the monthly effective federal funds rate, and $\text{IR}_t$, the monthly inflation rate computed as $\text{IR}_t = (\text{CPI}_{t}-\text{CPI}_{t-1})/\text{CPI}_{t-1}$, where $\text{CPI}_{t}$ is the U.S.~consumer price index. 
We consider data from February 1959 to August 2025, which are publicly available from the FRED database~\citep{mccracken2016fred}.

We consider two forecasting periods, Jan.\ 2014--Dec.\ 2019 (test period I) and Aug.\ 2022--Aug.\ 2025 (test period II) using all observations prior to each period as the corresponding training sample. 
We train BAND regression models on data $\vv{\boldsymbol{X}}_t = (\vv{\boldsymbol{U}}_t, \vv{\boldsymbol{U}}_{t-k})$
, where 
$\vv{\boldsymbol{U}}_t = (\text{FR}_t, \text{IR}_t, \text{UR}_t)$ 
and the forecasting horizon $k \in \{1,3,6\}$.
Table \ref{tab:statistics} shows some summary statistics of the data.
Note that the training data for test period II contain significant outliers in the unemployment series due to COVID-19, as shown by the extremely large range and the inflated bin-size-to-interquantile-range ratio.
The outliers could cause instability in conditional distribution estimation.
Hence we discuss test period I and II separately in our analysis.

The $k$-month-ahead forecasting confidence regions of BAND at $\alpha$ level are constructed as follows.
Suppose $\vv{\boldsymbol{U}}_{t-k}=\vv{u}_{t-k}$ is observed.
Then, focusing on the hypercubes whose last three coordinates contain $\vv{u}_{t-k}$, we select those with the highest predicted probabilities according to BAND until the cumulative probability just exceeds $\alpha$.
Next, as is standard for partition-based methods~\citep{hyndman1996computing}, the size of the final selected hypercube is adjusted by proportionally shrinking each dimension toward its center so that the resulting cumulative probability matches $\alpha$ exactly.
When constructing confidence regions for univariate variables (i.e., we are interested in marginal confidence regions), coordinates that are not of direct interest are ``integrated out'' by aggregation. 
Further details are provided in Section~\ref{SecA3} in the supplementary material.
Finally, the bin-size parameter \texttt{factor} (see Section \ref{Sec4}) is selected over a grid $\{0.4,0.5,\dots,2.0\}$ by minimizing the maximum discrepancy between the empirical coverage rate and the target $\alpha = 0.9$ level for the 6-month-ahead forecast of $\vv{\boldsymbol{U}}_{t}$.
Selected \texttt{factor} for each test period is reported in Table~\ref{tab:statistics}.

As benchmarks, we employ the quantile regression (QR) and the quantile random forests (QRF, \citealp{meinshausen2006quantile}) in our analysis.
To construct confidence regions for QR and QRF, we use the two-sided $(\frac{1-\alpha}{2}, \frac{1+\alpha}{2})$ quantile intervals, which are the default in \texttt{quantreg} and \texttt{quantregForest} in \textsf{R}. 
Multi-dimensional confidence regions are constructed by intersecting the corresponding marginal intervals, each adjusted via the Bonferroni correction: $\alpha_{\text{adj}} = 1 - \frac{1 - \alpha}{\text{Region Dimension}}$. 
Although the Bonferroni correction is known to be conservative and may be less suited for constructing multivariate forecasting confidence regions, such issue has received relatively limited attention in both classical and recent deep-learning approaches~\citep{kollovieh2023predict, kotelnikov2023tabddpm}.

{\renewcommand{\arraystretch}{0.9}
\setlength{\tabcolsep}{3pt} 
\begin{table}[ht]
\centering
\caption{
Summary statistics of the training data in the two forecasting periods, including the ranges, interquantile ranges (IQR), bin size (BS), and the BS-to-IQR (BS / IQR) ratio for each series.
The selected \texttt{factor} and the resulting number of bins (\texttt{\#}Bins) for each forecasting period are also reported.
}\label{tab:statistics}

\begin{tabular}{lccc|ccc}
\toprule
 &  \multicolumn{3}{c}{Test Period I} & \multicolumn{3}{c}{Test Period II}  \\
 &  \multicolumn{3}{c}{Jan.\ 2014--Dec.\ 2019} & \multicolumn{3}{c}{Aug.\ 2022--Aug.\ 2025}  \\
& $\text{FR}_t$ & $\text{IR}_t$ & $\text{UR}_t$
& $\text{FR}_t$ & $\text{IR}_t$ & $\text{UR}_t$ \\
\midrule
Range     & 9.69  & 0.0358 & 1.60  & 9.69 & 0.0358 & 12.60 \\
IQR       & 0.24  & 0.0033 & 0.20  & 0.20 & 0.0033 & 0.20 \\
BS        & 1.38 & 0.0051 & 0.23  & 1.07 & 0.0040 & 1.40 \\
BS / IQR  & 5.77  & 1.55   & 1.15  & 5.35 & 1.21   & 7.00 \\ \hline
\texttt{\#}Bins   & \multicolumn{3}{c|}{7}    & \multicolumn{3}{c}{9}  \\
\texttt{factor}   &  \multicolumn{3}{c|}{0.9} & \multicolumn{3}{c}{1.0} \\
\bottomrule
\end{tabular}
\end{table}}

{\renewcommand{\arraystretch}{0.9}
\begin{table}[ht]
\centering
\small
\caption{Average absolute gaps (Ave Gap), across different forecasting horizons $k \in \{1, 3, 6\}$, between the empirical coverage rate and the specified confidence level $\alpha$.
(A) average over univariate forecasts; (B) multivariate forecasts.}
\label{tab:comparison_}
\begin{tabular}{ccccc}
\toprule
Test Period & Ave Gap & Method & (A) & (B) \\
\midrule
\multirow{5}{*}{I}
 & \multirow{3}{*}{$\alpha \le 0.9$}
 &    BAND & 
          0.114 & 0.066 \\
 &  & QR   & 
          0.118 & 0.181 \\
 &  & QRF  & 
          0.114 & 0.143 \\
\cline{2-5}
 & \multirow{3}{*}{$\alpha = 0.9$}
 &    BAND & 
        0.060 & 0.032 \\
 &  & QR   & 
        0.059 & 0.061 \\
 &  & QRF  & 
        0.061 & 0.035 \\
\midrule
\multirow{5}{*}{II}
 & \multirow{3}{*}{$\alpha \le 0.9$}
 &    BAND & 
        0.186 & 0.085 \\
 &  & QR   & 
        0.115 & 0.266 \\
 &  & QRF  & 
        0.100 & 0.180 \\
\cline{2-5}
 & \multirow{3}{*}{$\alpha = 0.9$}
 &    BAND & 
        0.057 & 0.048 \\
 &  & QR   & 
        0.082 & 0.089 \\
 &  & QRF  & 
        0.047 & 0.044 \\
\bottomrule
\end{tabular}
\end{table}}

\subsection{Results}

We examine the gaps between the realized coverage rates and the target values of $\alpha$ for $\alpha \in \{0.1, 0.2, \dots, 0.9\}$. 
Table~\ref{tab:comparison_} reports the average gaps for univariate and multivariate forecasts. 
For test period I, when used to construct marginal forecasting confidence regions, the average coverage gaps of BAND for the univariate series are similar to those of QR and QRF, although it is trained using all six variables.
For multivariate forecasting confidence regions (column (B) in Table~\ref{tab:comparison_}), BAND clearly outperforms QR and QRF.
This suggests that BAND is effective for multivariate distribution estimation when trained on the full set of variables, leveraging its ability to handle higher-dimensional data.

For test period II, BAND shows larger average gaps in forecasting individual series because of the outliers. For multivariate forecast confidence regions, BAND and QRF perform comparably at $\alpha = 0.9$, while BAND achieves approximately 50\% smaller average gaps than QRF when averaged over $\alpha \leq 0.9$. Both methods outperform QR in these cases.
BAND performs more stably in multivariate than in univariate settings in period II. Intuitively, including more variables distributes the probability mass across multiple hypercubes, reducing the chance that a single hypercube dominates, which is a common issue in univariate cases and in the presence of outliers.

In summary, BAND demonstrates strong empirical performance for multivariate distribution estimation and is applicable to mixed-type distributions. By adapting to underlying sparsity, it achieves fast convergence rates that mitigate the curse of dimensionality, making it effective in high-dimensional distribution modeling.




\section*{Acknowledgements}
We thank Tengyuan Liang and Ruey Tsay for helpful comments and constructive suggestions.



\section*{Impact Statement}

This paper presents work whose goal is to advance the field of 
Machine Learning. There are many potential societal consequences 
of our work, none which we feel must be specifically highlighted here.





\bibliography{references}
\bibliographystyle{icml2026}

\newpage
\appendix
\onecolumn

\begin{center}{\bf \Large Supplementary Material to ``Breaking the Curse with BAND: Nonparametric Distribution Estimation in High Dimensions''}
\bigskip
		
\end{center}

\noindent This Supplementary Material contains the proofs of Theorem \ref{theorem2} and Corollaries \ref{corollary1} and \ref{corollary2}. Hereafter, we define $\prod_{q \in S} \boldsymbol{1}\{\cdots\} = 1$ when $S = \emptyset$, 
implying $\mathbb{P}\big(\cap_{q \in S} \{\cdots\}\big) = 1$. Let $\mathbb{R}^p_c$ and $\mathbb{R}^p(S, \vv{d}_S)$ respectively denote $\mathcal{A}_c$ and $\mathcal{A}(S, \vv{d}_S)$ with $\mathcal{A} = \mathbb{R}^p$, as in Section~\ref{Sec2.1}.

\section{Supplementary Material for Main Texts}\label{SecA}

\subsection{Empirical Cramér–Wold Distance in Section~\ref{Sec4}}\label{l1measure}

Let $\{\vv{\boldsymbol{X}}_t\}_{t=1}^{n_1}$ and $\{\vv{\boldsymbol{Y}}_t\}_{t=1}^{n_2}$ denote two samples of sizes $n_1$ and $n_2$, respectively.  
For any unit direction $\vv{v}\in\mathbb{R}^p$ with $\|\vv{v}\|_2=1$, define the empirical projected CDFs
\[
\widehat{F}_{\vv{v}}(z)
= \frac{1}{n_1}\sum_{t=1}^{n_1} 
\boldsymbol{1}\!\left\{\vv{v}^{\!\top}\vv{\boldsymbol{X}}_t \le z \right\},
\qquad
\widehat{G}_{\vv{v}}(z)
= \frac{1}{n_2}\sum_{t=1}^{n_2} 
\boldsymbol{1}\!\left\{\vv{v}^{\!\top}\vv{\boldsymbol{Y}}_t \le z \right\}.
\]
The empirical Cramér–Wold distance between the two samples is given by
\[
 \max_{z \in D}\; \max_{\vv{v} \in V}
\left|\widehat{F}_{\vv{v}}(z) - 
\widehat{G}_{\vv{v}}(z)\right|,
\]
where the set of thresholds $D$ is discretized as
$D = \{-4,-3.9,-3.8,\dots,7\}$,
to approximate the real line. The set $V$ consists of $10{,}000$ randomly generated unit vectors in $\mathbb{R}^p$, used to approximate the supremum over all directions.

In our experiments, each sequence, \(\{\vv{\boldsymbol{X}}_t\}_{t=1}^{n_1}\) or \(\{\vv{\boldsymbol{Y}}_t\}_{t=1}^{n_2}\), corresponds to one of the following: (i) a fitted distribution model, (ii) the ground-truth data-generating process, or (iii) the original training sample.

\subsection{Monte Carlo Probability Estimation and Sampling via BAND}\label{sampleing}

Here, we introduce how to use the fitted BAND model to estimate $\mathbb{P}(\boldsymbol{X} \in \mathcal{A})$ for any measurable set $\mathcal{A} \subseteq \mathbb{R}^p$. Let $N$ denote the number of samples generated from a BAND model. Let
\[
\{\widehat{\vv{h}}_{i,1}, \dots, \widehat{\vv{h}}_{i,p}\}_{i=1}^N
\]
denote the generated samples, where each $\widehat{\vv{h}}_{i,j}=(\widehat{h}_{i,j,1},\dots,\widehat{h}_{i,j,m_j})^{\top}$ is an $m_j$-dimensional one-hot vector.

For a fixed sample index $i$, the sampling procedure proceeds sequentially over coordinates $j=1,\dots,p$. For each $l\in\{1,\dots,m_j\}$, define
\[
v_l = \widehat{e}_{jl}(\widehat{\vv{h}}_{i,j-1},\dots,\widehat{\vv{h}}_{i,1}),
\]
and apply the thresholding rule
\[
\widehat{e}_{jl}^{\dagger}(\widehat{\vv{h}}_{i,j-1},\dots,\widehat{\vv{h}}_{i,1})
=
\begin{cases}
v_l, & v_l \ge \texttt{SPTF} \times \mathrm{Standard \ Deviation}\big(\{v_l\}_{l=1}^{m_j}\big), \\
0, & \text{otherwise}.
\end{cases}
\]
When $j=1$, we set $\widehat{e}_{jl}(\widehat{\vv{h}}_{i,j-1},\dots,\widehat{\vv{h}}_{i,1})=\widehat{e}_{jl}$.

The $i$th sample is then assigned to an interval index
\(
l \in \{1,\dots,m_j\}
\)
by sampling from the probability vector
\[
\left(
\frac{\widehat{e}_{j1}^{\dagger}(\widehat{\vv{h}}_{i,j-1},\dots,\widehat{\vv{h}}_{i,1})}{V},
\dots,
\frac{\widehat{e}_{jm_j}^{\dagger}(\widehat{\vv{h}}_{i,j-1},\dots,\widehat{\vv{h}}_{i,1})}{V}
\right),
\qquad
V=\sum_{l=1}^{m_j}\widehat{e}_{jl}^{\dagger}(\widehat{\vv{h}}_{i,j-1},\dots,\widehat{\vv{h}}_{i,1}).
\]
Given the sampled index $l$, we construct the one-hot vector $\widehat{\vv{h}}_{i,j}$ by setting
\[
\widehat{h}_{i,j,l} \leftarrow 1,
\qquad
\widehat{h}_{i,j,k} \leftarrow 0 \quad \text{for all } k \neq l .
\]

After completing this procedure for all coordinates, we obtain a collection of one-hot vectors
$\{\widehat{\vv{h}}_{i,l}\}_{1\le i\le N,\ 1\le l\le p}$,
where each $(\widehat{\vv{h}}_{i,1},\dots,\widehat{\vv{h}}_{i,p})$ identifies a hyper-rectangular cube. For each $i$, a point $\vv{x}_i$ is then sampled uniformly from the corresponding bin. The Monte Carlo estimator of $\mathbb{P}(\boldsymbol{X}\in\mathcal{A})$ is given by
\[
\frac{1}{N}\sum_{i=1}^{N}\boldsymbol{1}\{\vv{x}_i\in\mathcal{A}\}.
\]

The above sampling procedure depends on a tuning parameter $\texttt{SPTF}\in[0,0.4]$. Its optimal value is selected by minimizing the Cram\'{e}r--Wold distance (see Section~\ref{Sec4.2b}) between the training data and the generated samples.

\subsection{Conditional Confidence Regions Construction}\label{SecA3}

The procedure for constructing a $k$-month forecast confidence region for $\vv{\boldsymbol{U}}_t$ at confidence level $\alpha$, conditional on the past state $\vv{\boldsymbol{U}}_{t-k}$, using BAND, is as follows. First, BAND selects all hypercubes that contain the past state $\vv{\boldsymbol{U}}_{t-k}$; the predicted probabilities associated with these hypercubes sum to a total probability $\widehat{p}$. Second, BAND ranks these hypercubes by their predicted probabilities and selects them sequentially until the cumulative probability reaches $\alpha \widehat{p}$. At this point, the union of the selected hypercubes generally overestimates the desired confidence region. To correct this, the final hypercube is shrunk toward its center so that the cumulative probability matches the threshold $\alpha \widehat{p}$ exactly, under the assumption of uniform probability within each hypercube—a standard assumption for partition-based predictors such as BAND.

The resulting union of hypercubes, denoted by $\widehat{\mathcal{C}}(\vv{\boldsymbol{U}}_{t-k})$, constitutes the forecasted confidence region. The empirical coverage is then computed as 
\[
\frac{\texttt{\#}\{t \in \text{TestSet} \mid \vv{\boldsymbol{U}}_t \in \widehat{\mathcal{C}}(\vv{\boldsymbol{U}}_{t-k})\}}{\texttt{\#} \text{TestSet}}.
\]

When constructing a confidence region for a subset of target variables $\vv{\boldsymbol{V}}_t \subset \vv{\boldsymbol{U}}_t$, the irrelevant coordinates (i.e., those in $\vv{\boldsymbol{U}}_t \setminus \vv{\boldsymbol{V}}_t$) are marginalized out, and the same procedure is applied. In BAND, this corresponds to summing the predicted probabilities across all hypercubes along the irrelevant dimensions, yielding a confidence region for the variables of interest $\vv{\boldsymbol{V}}_t$.

Details of our implementation of BAND are provided in the accompanying \textsf{R} code.

\section{Proofs of Main Results}

\subsection{Theorems~\ref{theorem2} and \ref{theorem1} and Their Proofs}\label{proof.theorem1}
Theorem~\ref{theorem1} below provides the full version of Theorem~\ref{theorem2}, extending it to include results for conditional distributions while remaining otherwise identical. Let $\mathbb{R}^p_c$ and $\mathbb{R}^p(S, \vv{d}_S)$ respectively denote $\mathcal{A}_c$ and $\mathcal{A}(S, \vv{d}_S)$ with $\mathcal{A} = \mathbb{R}^p$. Note that the way we construct bins and cubes as in Section~\ref{Sec2.2} leads to 
$\mathbb{P}(\vv{\boldsymbol{X}} \in \mathcal{A} ) =  \sum_{\vv{h}_1, \dots, \vv{h}_p} \mathbb{P}(\vv{\boldsymbol{X}} \in \mathcal{A} \cap  \text{Cube}(\vv{h}_1, \dots, \vv{h}_p))$.

For the reader’s convenience, we recall the following facts from Sections~\ref{Sec2.1}--\ref{Sec2.2}:
\begin{itemize}
    \item[(1)] $\mathrm{Cube}(\vv{h}_1, \dots, \vv{h}_p)$ denotes the cube formed by the $\omega_j(\vv{h}_j)$th bin along the $j$th coordinate; all cubes are disjoint.
    
    \item[(2)] If $\vv{x} \in \mathbb{R}^p(S, \vv{d}_S)$ for some $(S, \vv{d}_S) \in \Theta$, then $\vv{x}_S$ lies in the discrete point sets $D_j$ for $j \in S$ (i.e., $\vv{x}_S = \vv{d}_S$), while each coordinate of $\vv{x}_{S^c}$ is continuous.  
    
    \item[(3)] For each $(S, \vv{d}_S) \in \Theta$, $\text{Cube}(\vv{h}_1, \dots, \vv{h}_p) \subset \mathbb{R}^p(S, \vv{d}_S)$ iff $\text{Cube}(\vv{h}_1, \dots, \vv{h}_p) \cap \mathbb{R}^p(S, \vv{d}_S) \neq \emptyset$; additionally, $\text{Cube}(\vv{h}_1, \dots, \vv{h}_p) \subset \mathbb{R}_c^p$ iff $\text{Cube}(\vv{h}_1, \dots, \vv{h}_p) \cap \mathbb{R}_c^p \neq \emptyset$.  
    
    \item[(4)] $\text{Cube}(\vv{h}_1, \dots, \vv{h}_p) \subset [-\delta_n, \delta_n]^p$ iff $\text{Cube}(\vv{h}_1, \dots, \vv{h}_p) \cap [-\delta_n, \delta_n]^p \neq \emptyset$.
    
    \item[(5)] $\pi_{S, \vv{d}_S}(\vv{x}) = 1$ for $\vv{x}$ in a discrete cube (i.e., $S^c = \emptyset$), and $\pi_c(\vv{x}) = 0$ when $\mathbb{R}_c^p = \emptyset$.  
\end{itemize}

\begin{theorem}\label{theorem1} 
Let \( \delta_n > 0 \) be a positive sequence. Set \(\widehat{e}_{jl}(\vv{h}_{j-1}, \dots, \vv{h}_1) = 0\) 
if \(\omega_q(\vv{h}_q) = m_q\) for some \(1 \le q < j\) or \(l = m_j\).  For all large $n$, assume Condition~\ref{mean.regression}, $\frac{ep\sqrt{T}}{d_{\min}} \le 1$, $\varepsilon  \le (\sqrt{p}\delta_n K_{1})^{-1}$, and that the $p_n$-dimensional stationary distribution of $\vv{\boldsymbol{X}}_t$ satisfies Condition~\ref{conti} with $R = \delta_n$. It holds for all large \(n\) that with probability at least \(1 - \Delta\), for all \(Q \subset \{1, \dots, p\}\) with \(1 \le \texttt{\#}Q < p\), 
\(\vartheta > 1\), and \(\mathcal{H} \in \mathcal{R}^{Q^c}\) satisfying 
\(\mathbb{P}(\vv{\boldsymbol{X}}_{Q^c} \in \mathcal{H}) \ge \vartheta \, \zeta\):
\begin{equation*}
\begin{split}
 & \sup_{\mathcal{A} \in \mathcal{R}^p}\big| \mathbb{P}(\vv{\boldsymbol{X}} \in \mathcal{A}) - \widehat{\mu}(\mathcal{A}) \big|  \le  
       \frac{ep\sqrt{T}}{d_{\min}}  
        + \varepsilon   \sqrt{p} \delta_n K_{1} e + \mathbb{P}(\vv{\boldsymbol{X}}\not\in [-\delta_n, \delta_n]^p )\eqqcolon \zeta,
\end{split}
\end{equation*}
and that
\begin{equation*}
    \begin{split}        
 & \sup_{\mathcal{B} \in \mathcal{R}^{\texttt{\#}Q}} \left| \mathbb{P}(\vv{\boldsymbol{X}}_Q \in \mathcal{B} \mid \vv{\boldsymbol{X}}_{Q^c} \in \mathcal{H}) - \frac{\widehat{\mu}(\{\vv{x}\in\mathbb{R}^p\mid \vv{x}_Q \in \mathcal{B} ,  \vv{x}_{Q^c} \in \mathcal{H}\} )} {\widehat{\mu}(\{\vv{x}\in\mathbb{R}^p\mid \vv{x}_{Q^c} \in \mathcal{H}\} )} \right|\le \frac{2}{(\vartheta-1)}.
    \end{split}
\end{equation*}

\end{theorem}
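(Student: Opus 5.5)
We work throughout on the complement of the exceptional event of Condition~\ref{mean.regression}, which has probability at least $1-\Delta$. (We read that condition as saying that all regression deviations are at most $\sqrt{T}$ off an event of probability at most $\Delta$.) The total variation bound splits via the triangle inequality into three pieces. Let $\mu^{\star}(\mathcal{A}) = \sum_{\vv{h}_{1:p}} \nu_{\vv{h}_{1:p}}(\mathcal{A}) \prod_{j} e_{j,\omega_j(\vv{h}_j)}(\vv{h}_{j-1:1})$ be the ``oracle histogram'', built with the true $e_{jl}$ and restricted to interior cubes, i.e.\ those with every $\omega_q(\vv{h}_q)<m_q$. By~\eqref{band} the true product equals $\mathbb{P}(\vv{\boldsymbol{X}}\in \text{Cube}(\vv{h}_{1:p}))$. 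The three pieces are:
\begin{itemize}
\item[(i)] the tail cubes, whose total true mass is $Q_{\delta_n}$ and on which $\widehat{\mu}$ puts zero mass by the convention $\widehat e_{jl}=0$;
\item[(ii)] the bias $\sup_{\mathcal{A}}|\mathbb{P}(\vv{\boldsymbol{X}}\in\mathcal{A}\cap[-\delta_n,\delta_n]^p)-\mu^{\star}(\mathcal{A})|$;
\item[(iii)] the variance $\sum_{\text{interior}}|\prod_j e_{j\cdot}-\prod_j\widehat e_{j\cdot}|$.
\end{itemize}

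\textbf{Variance.} On an interior cube, every factor satisfies $e\ge d_{\min}$ and $|\widehat e-e|\le\sqrt T$, so $\widehat e/e\in[1-\sqrt T/d_{\min},\,1+\sqrt T/d_{\min}]$. Hence
\[
\bigl|\textstyle\prod_j \widehat e_j/\prod_j e_j - 1\bigr| \le (1+\sqrt T/d_{\min})^p - 1 \le e^{p\sqrt T/d_{\min}}-1 \le e\,p\sqrt T/d_{\min},
\]
where the last step uses $e^x-1\le e\,x$ for $x\le 1$, which is guaranteed by $ep\sqrt T\le d_{\min}$. Multiplying by $\prod_j e_j$ and summing over cubes, whose true probabilities add to at most one, gives the first term $ep\sqrt T/d_{\min}$.

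\textbf{Bias.} Within an interior cube $C$ lying in the $\theta$-region ($\theta\in\Theta^\star$), the true law restricted to $C$ has density proportional to $\pi_\theta$ on the continuous coordinates. By property (5), discrete cubes contribute zero bias. For continuous coordinates, Condition~\ref{conti} with $R=\delta_n$ and the mean value theorem give $|\log\pi_\theta(\vv{x})-\log\pi_\theta(\vv{y})|\le K_1\sqrt p\,\delta_n\varepsilon=:a\le1$ for $\vv{x},\vv{y}\in C$. The normalized density $\pi_\theta/\int_C\pi_\theta$ (relative to the uniform density $\nu_C$) therefore lies in $[e^{-a},e^{a}]$. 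Consequently $|\mathbb{P}(\vv{\boldsymbol{X}}\in\mathcal{A}\cap C)-\mathbb{P}(C)\nu_C(\mathcal{A})|\le \mathbb{P}(C)(e^a-1)\le \mathbb{P}(C)\,e\,a$. Summing over $C$ yields $\varepsilon\sqrt p\delta_nK_1e$. The main care needed here is the bookkeeping of properties (2)--(4): each interior cube sits entirely inside one region $\mathbb{R}^p(S,\vv d_S)$ or $\mathbb{R}^p_c$, and the normalization by $\mathbb{P}(\vv{\boldsymbol{X}}_S=\vv d_S)$ in~\eqref{stationary.dist} cancels in the ratio, so only $\phi_{S,\vv d_S}$ matters. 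This is the step I expect to be most delicate, since it relies on the cube partition aligning with the region partition.

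\textbf{Conditional statement.} Write $P(\cdot)$ and $\widehat\mu(\cdot)$ for the joint events $\{\vv{x}_Q\in\mathcal B,\vv{x}_{Q^c}\in\mathcal H\}$ and $\{\vv{x}_{Q^c}\in\mathcal H\}$, with numerators $a,\hat a$ and denominators $b,\hat b$. We have $|a-\hat a|\le\zeta$, $|b-\hat b|\le\zeta$, and $b\ge\vartheta\zeta$, so $\hat b\ge(\vartheta-1)\zeta>0$. Then
\[
\left|\frac ab-\frac{\hat a}{\hat b}\right|\le \frac{|a-\hat a|}{\hat b}+\frac{a}{b}\cdot\frac{|b-\hat b|}{\hat b}\le \frac{\zeta}{(\vartheta-1)\zeta}+\frac{\zeta}{(\vartheta-1)\zeta}=\frac{2}{\vartheta-1},
\]
using $a\le b$, and this holds uniformly in $\mathcal B$, $Q$, and $\mathcal H$.
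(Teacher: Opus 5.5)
Your proposal is correct and follows essentially the paper's proof. Both arguments split the error into tail, bias and variance around an oracle histogram built from the true $e_{jl}$, and both control the bias with the log-gradient bound inside each cube. Both control the variance multiplicatively on $E^{\dagger}$, and both prove the conditional claim with the same elementary ratio inequality.

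Two differences are cosmetic. You bound $\prod_j \widehat e_j/\prod_j e_j$ directly by $(1+\sqrt{T}/d_{\min})^p-1$, where the paper uses a telescoping identity. You bound the within-cube density ratio by $e^{\pm a}$, where the paper goes through Lemma~\ref{lemma11} and the mean value theorem for integrals. Your versions are slightly cleaner and give the same constants.
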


Note that $(\Delta, d_{\min}, T, E^{\dagger}, \varepsilon, p) = (\Delta_n, d_{\min, n}, T_n, E_{n}^{\dagger}, \varepsilon_n, p_n)$ but the subscripts $n$ are omitted for simplicity.

\noindent \textit{Proof of Theorem~\ref{theorem1}: }  We begin with defining an auxiliary distribution \(\widetilde{\mu}\) 
in \eqref{prop2.15}, which is the population counterpart of our distribution estimator~\eqref{estimate.stationary.dist}. For each $(\vv{h}_1, \dots, \vv{h}_p)\in \{0, 1\}^{m_1}\times \dots \times \{0, 1\}^{m_p}$ with one-hot $\vv{h}_j$,
every measurable $\mathcal{A}\subset \mathbb{R}^p$, and each $\vv{x}\in \text{Cube}(\vv{h}_1, \dots, \vv{h}_p)$, define
\begin{equation}
    \begin{split}\label{prop2.15}
\widetilde{\mu}( \mathcal{A}) & = \sum_{\vv{h}_1, \dots, \vv{h}_p} \frac{\bar{\nu}_{\vv{h}_{1:p}}\big(\mathcal{A} \cap \text{Cube}(\vv{h}_1, \dots, \vv{h}_p)\big)}{\bar{\nu}_{\vv{h}_{1:p}}\big(\text{Cube}(\vv{h}_1, \dots, \vv{h}_p) \big) }
\prod_{j=1}^p e_{j,\omega_j(h_j)}(\vv{h}_{j-1}, \dots, \vv{h}_{1}) , 
\end{split}
\end{equation}
where $\bar{\nu}_{\vv{h}_{1:p}}$ denotes Lebesgue measure with respect to the continuous coordinates of $\text{Cube}(\vv{h}_{1:p})$, with zero measure outside the cube. In addition, if $\text{Cube}(\vv{h}_{1:p})$ is a discrete cube contained by $\mathcal{G}$, then $\bar{\nu}_{\vv{h}_{1:p}}(\mathcal{G}) = 1$; if $\text{Cube}(\vv{h}_{1:p})$ is a discrete cube not contained by $\mathcal{G}$, then $\bar{\nu}_{\vv{h}_{1:p}}(\mathcal{G}) = 0$. 

In what follows, we clarify the definitions of $\bar{\nu}_{\vv{h}_{1:p}}$ and
$\widetilde{\mu}$, which will be used repeatedly in the sequel. In particular, $\bar{\nu}_{\vv{h}_{1:p}}$ admits a direct integral representation.
By fact (3) stated at the beginning of Section~\ref{proof.theorem1}, for any
$\mathcal{A} \in \mathcal{R}^p$, $\bar{\nu}_{\vv{h}_{1:p}}(\mathcal{A})$ satisfies
\begin{equation}
\label{ientity.2}
\begin{aligned}
\bar{\nu}_{\vv{h}_{1:p}}(\mathcal{A}) &= 0,
&& \text{if } \mathcal{A} \cap \mathrm{Cube}(\vv{h}_1, \dots, \vv{h}_p) = \emptyset, \\[0.6em]
\bar{\nu}_{\vv{h}_{1:p}}(\mathcal{A})
&= \bar{\nu}_{\vv{h}_{1:p}}\big( \mathcal{A}(S, \vv{d}_S) \big)
= \int_{\vv{u} \in \mathcal{A}(S, \vv{d}_S) \cap \mathrm{Cube}(\vv{h}_1, \dots, \vv{h}_p)}
1 \, d\vv{u}_{S^c},
&& \text{if } \mathcal{A}(S, \vv{d}_S) \cap \mathrm{Cube}(\vv{h}_1, \dots, \vv{h}_p) \neq \emptyset, \\[0.8em]
\bar{\nu}_{\vv{h}_{1:p}}(\mathcal{A})
&= \bar{\nu}_{\vv{h}_{1:p}}(\mathcal{A}_c)
= \int_{\vv{u} \in \mathcal{A}_c \cap \mathrm{Cube}(\vv{h}_1, \dots, \vv{h}_p)}
1 \, d\vv{u},
&& \text{if } \mathcal{A}_c \cap \mathrm{Cube}(\vv{h}_1, \dots, \vv{h}_p) \neq \emptyset .
\end{aligned}
\end{equation}
Hereafter, when $S^c \neq \emptyset$, integrals with respect to $d\vv{u}_{S^c}$ are understood to be taken over the coordinates indexed by $S^c$, with the remaining coordinates indexed by $S$ fixed at $\vv{d}_S$; when $S^c = \emptyset$, such integrals are defined to equal $1$. Under this convention, the second integral may be written as $\int_{\vv{u} \in \bar{\mathcal{A}}_{\vv{h}_{1:p}}(S, \vv{d}_S)} 1 \, d\vv{u}_{S^c}$, where $\bar{\mathcal{A}}_{\vv{h}_{1:p}}(S, \vv{d}_S) = \{\vv{a}_{S^c} : \vv{a} \in \mathcal{A}(S, \vv{d}_S) \cap \mathrm{Cube}(\vv{h}_1, \dots, \vv{h}_p)\}$. The definition of \eqref{ientity.2} ensures that all integrals appearing in the subsequent derivations are expressed in a consistent and unified manner.

Moreover, by the definition of $\nu_{\vv{h}_{1:p}}$ in \eqref{estimate.stationary.dist},
\[
\nu_{\vv{h}_{1:p}}(\mathcal{A})
= \frac{\bar{\nu}_{\vv{h}_{1:p}}\big( \mathcal{A} \cap \mathrm{Cube}(\vv{h}_1, \dots, \vv{h}_p) \big)}
{\bar{\nu}_{\vv{h}_{1:p}}\big( \mathrm{Cube}(\vv{h}_1, \dots, \vv{h}_p) \big)}
= \frac{\bar{\nu}_{\vv{h}_{1:p}}(\mathcal{A})}
{\bar{\nu}_{\vv{h}_{1:p}}\big( \mathrm{Cube}(\vv{h}_1, \dots, \vv{h}_p) \big)}.
\]

Finally, by the cube construction (following \eqref{estimate.stationary.dist})
and the definition of $\widetilde{\mu}$ in \eqref{prop2.15}, both
$\mathbb{P}(\vv{\boldsymbol{X}} \in \cdot)$ and $\widetilde{\mu}$ decompose over the
partition: for any $\mathcal{A} \in \mathcal{R}^p$,
\begin{equation}
\label{cube.partiion}
\begin{split}
\mathbb{P}(\vv{\boldsymbol{X}} \in \mathcal{A})
&= \sum_{\vv{h}_1, \dots, \vv{h}_p}
\mathbb{P}\big( \vv{\boldsymbol{X}} \in \mathcal{A} \cap \mathrm{Cube}(\vv{h}_1, \dots, \vv{h}_p) \big), \\
\widetilde{\mu}(\mathcal{A})
&= \sum_{\vv{h}_1, \dots, \vv{h}_p}
\widetilde{\mu}\big( \mathcal{A} \cap \mathrm{Cube}(\vv{h}_1, \dots, \vv{h}_p) \big),
\end{split}
\end{equation}
where the sum ranges over all one-hot vectors.

A bias–variance decomposition of the \(L_1\) upper bound for our distribution estimator is derived as follows. For any measurable set \(\mathcal{A} \subset \mathbb{R}^p\),
\begin{equation}
\label{theorem1.31}
\big|\mathbb{P}(\vv{\boldsymbol{X}} \in \mathcal{A}) - \widehat{\mu}(\mathcal{A})\big|
\le \big|\widetilde{\mu}(\mathcal{A}) - \widehat{\mu}(\mathcal{A})\big|
+ \big|\mathbb{P}(\vv{\boldsymbol{X}} \in \mathcal{A}) - \widetilde{\mu}(\mathcal{A})\big|.
\end{equation}
We briefly outline the strategy used to analyze \eqref{theorem1.31}. The second term corresponds to the bias. To illustrate how this term is controlled by the estimator defined in \eqref{prop2.15}, we first consider the continuous component while temporarily ignoring tail probabilities. In this case, \eqref{prop2.15} implies that the density \(\pi_c(\vv{x})\) is approximated on each cube by \(\mathbb{P}(\vv{\boldsymbol{X}} \in \mathrm{Cube}(\vv{h}_1, \dots, \vv{h}_p))\), as shown in \eqref{prop2.6} and \eqref{prop2.7} below. When the maximum cube diameter \(\varepsilon\) is sufficiently small, this approximation is accurate provided that \(\pi_c\) satisfies the smoothness Condition~\ref{conti}. The first term represents the variance. Its control relies on Condition~\ref{mean.regression}, which is used to bound the discrepancy between \(\prod_{j=1}^p e_{j, \omega_j(h_j)}(\vv{h}_{j-1}, \dots, \vv{h}_1)\) and its empirical counterpart \(\prod_{j=1}^p \widehat{e}_{j, \omega_j(h_j)}(\vv{h}_{j-1}, \dots, \vv{h}_1)\), as established in \eqref{prop2.17} below.


We now establish an upper bound for the second term on the right-hand side of \eqref{theorem1.31}. Analogously to the cube partition argument in \eqref{cube.partiion}, we derive a probability upper bound by partitioning according to the discrete–continuous structure of the distribution. In particular, for every measurable set $\mathcal{A} \subset [-\delta_n, \delta_n]^p$, we obtain the following partition inequality.
\begin{equation}
    \begin{split}\label{prop2.16.bb}
          & |\mathbb{P}(\vv{\boldsymbol{X}} \in \mathcal{A} ) -  \widetilde{\mu}(\mathcal{A})| \\
        & \le |\mathbb{P}(\vv{\boldsymbol{X}} \in \mathcal{A}_c ) -  \widetilde{\mu}(\mathcal{A}_c)| + \sum_{(S, \vv{d}_S) \in\Theta} |\mathbb{P}(\vv{\boldsymbol{X}} \in \mathcal{A}(S, \vv{d}_S) ) -  \widetilde{\mu}(\mathcal{A}(S, \vv{d}_S) )|.
    \end{split}    
\end{equation}
We will bound each deviation appearing on the right-hand side of \eqref{prop2.16.bb} individually. To this end, we compare the quantity $\prod_{j=1}^p e_{j, \omega_j(h_j)}(\vv{h}_{j-1}, \dots, \vv{h}_1)$ with the corresponding underlying conditional density on each hypercube, as formalized in \eqref{prop2.8} below.

An expression for $\prod_{j=1}^p e_{j, \omega_j(h_j)}(\vv{h}_{j-1}, \dots, \vv{h}_1)$ is derived as follows. By \eqref{stationary.dist}–\eqref{band}, together with the definitions of the random vectors $\vv{\boldsymbol{H}}_j$ and the probability functions $e_{jl}$ given in Section~\ref{Sec2.2}, it follows that for every one-hot vector $\vv{h}_j \in \{0,1\}^{m_j}$,
\begin{equation}
\label{prop2.6}
\begin{split}  
& \prod_{j=1}^p e_{j, \omega_j(h_j)}(\vv{h}_{j-1}, \dots, \vv{h}_1) \\
&= \mathbb{P}\!\left(\bigcap_{j=1}^p \{\vv{\boldsymbol{H}}_j = \vv{h}_j\}\right) \\
&= \mathbb{P}\!\left(\vv{\boldsymbol{X}} \in \text{Cube}(\vv{h}_1, \dots, \vv{h}_p)\right) \\
&= 
\begin{cases}
\mathbb{P}(\vv{\boldsymbol{X}}_S = \vv{d}_S ) \displaystyle\int_{\vv{u}\in \text{Cube}(\vv{h}_1, \dots, \vv{h}_p)} \pi_{S, \vv{d}_S}(\vv{u}) \, d\vv{u}_{S^c}, & \text{if } \text{Cube}(\vv{h}_1, \dots, \vv{h}_p)\cap \mathbb{R}^p(S, \vv{d}_S) \not=\emptyset,
\\[4ex]
\displaystyle\int_{\vv{u}\in \text{Cube}(\vv{h}_1, \dots, \vv{h}_p)} \pi_c(\vv{u}) \, d\vv{u}, & \text{if } \text{Cube}(\vv{h}_1, \dots, \vv{h}_p)\cap  \mathbb{R}^p_c \not=\emptyset.
\end{cases}
\end{split}
\end{equation}
Here, the third equality follows from fact (3) stated at the beginning of Section~\ref{proof.theorem1}. We also recall the definition of integration with respect to $d\vv{u}_{S^c}$ from \eqref{ientity.2}.

We next control the discrepancy between these expressions and their approximating densities. By the definition of $\varepsilon$, Lemma~\ref{lemma11} in Section~\ref{proof.lemma11}, Condition~\ref{conti} with $R=\delta_n$, and the assumption $\varepsilon \le (K_1 \sqrt{p}\delta_n)^{-1}$, we obtain that every $\pi_{\theta}$ with $\theta \in \Theta^{\star}$ (see Section~\ref{Sec4.2}) satisfies
\begin{equation}
    \label{conti2}
    \sup_{\norm{\vv{y}}_2 \le \delta_n, \norm{\vv{y} - \vv{x}}_2 \le \varepsilon } |\pi_{\theta}(\vv{y}) - \pi_{\theta}(\vv{x})| \le K_{1}\sqrt{p} \varepsilon  \times \delta_n \times e \times \pi_{\theta}(\vv{x})
\end{equation}
on $\norm{\vv{x}}_{\infty}\le  \delta_n$. Combining \eqref{conti2} with the fact that \(\varepsilon\) denotes the maximum diameter of cubes within \( [-\delta_n, \delta_n]^p\), and applying the Mean Value Theorem for Integrals, it follows that for each $\vv{x}\in [-\delta_n, \delta_n]^p$,
\begin{equation}
\label{prop2.8}
\begin{aligned}
\frac{|\mathbb{P}(\vv{\boldsymbol{X}}_S = \vv{d}_S )\,\pi_{S, \vv{d}_S}(\vv{x}) - \widetilde{\pi}(\vv{x})|}{ \mathbb{P}(\vv{\boldsymbol{X}}_S = \vv{d}_S )}
&\le  \varepsilon   \sqrt{p} \delta_n K_{1} e \pi_{S, \vv{d}_S}(\vv{x}), && \text{if } \vv{x} \in \text{Cube}(\vv{h}_1, \dots, \vv{h}_p)\cap \mathbb{R}^p(S, \vv{d}_S),\\
|\pi_c(\vv{x}) - \widetilde{\pi}(\vv{x})|
&\le \varepsilon   \sqrt{p} \delta_n K_{1} e \pi_{c}(\vv{x}) ,             &&   \text{if } \vv{x} \in \text{Cube}(\vv{h}_1, \dots, \vv{h}_p)\cap \mathbb{R}^p_c,
\end{aligned}
\end{equation}
which follows from \eqref{ientity.2} and the definition that for every $\vv{x}\in [-\delta_n, \delta_n]^p $,
\begin{equation}
\label{prop2.7}
\widetilde{\pi}(\vv{x}) = 
\begin{cases}
\displaystyle
\frac{\mathbb{P}(\vv{\boldsymbol{X}}_S = \vv{d}_S ) \int_{\vv{u}\in \text{Cube}(\vv{h}_1, \dots, \vv{h}_p)} 
\pi_{S, \vv{d}_S}(\vv{u}) \, d\vv{u}_{S^c}}
{\bar{\nu}_{\vv{h}_{1:p}}\big( \text{Cube}(\vv{h}_1, \dots, \vv{h}_p)\big)},
& \text{if } \vv{x} \in \text{Cube}(\vv{h}_1, \dots, \vv{h}_p)\cap \mathbb{R}^p(S, \vv{d}_S), \\[3ex]
\displaystyle
\frac{\int_{\vv{u}\in \text{Cube}(\vv{h}_1, \dots, \vv{h}_p)} 
\pi_c(\vv{u}) \, d\vv{u}}
{\bar{\nu}_{\vv{h}_{1:p}}\big( \text{Cube}(\vv{h}_1, \dots, \vv{h}_p)\big)},
& \text{if } \vv{x} \in \text{Cube}(\vv{h}_1, \dots, \vv{h}_p)\cap \mathbb{R}^p_c .
\end{cases}
\end{equation}

 Note that we have defined $\pi_{c}(\vv{x}) = 0$ for discrete distributions; hence, 
$|\pi_c(\vv{x}) - \widetilde{\pi}(\vv{x})| = 0$. 
Moreover, since $\pi_{S, \vv{d}_S}(\vv{x}) = 1$ when $\mathbb{R}^p(S, \vv{d}_S)$ consists of a single discrete point, it follows that 
$|\mathbb{P}(\vv{\boldsymbol{X}}_S = \vv{d}_S)\,\pi_{S, \vv{d}_S}(\vv{x}) - \widetilde{\pi}(\vv{x})| = 0$. 
Therefore, \eqref{prop2.8} may not be tight in this case, but it is sufficient for our analysis.

We now complete the derivation of upper bounds for the terms appearing on the right-hand side of \eqref{prop2.16.bb}. Combining \eqref{prop2.15}–\eqref{cube.partiion}, \eqref{prop2.6}, and \eqref{prop2.8}, together with fact (3) stated at the beginning of Section~\ref{proof.theorem1}, we obtain that for every measurable set $\mathcal{A} \subset [-\delta_n, \delta_n]^p$,
\begin{equation*}
    \begin{split}
        & |\mathbb{P}(\vv{\boldsymbol{X}} \in \mathcal{A}_c ) -  \widetilde{\mu}(\mathcal{A}_c)| \\
        & \le \sum_{\vv{h}_1, \dots, \vv{h}_p} \left|\mathbb{P}(\vv{\boldsymbol{X}} \in \mathcal{A}_c \cap  \text{Cube}(\vv{h}_1, \dots, \vv{h}_p)) -  \widetilde{\mu}(\mathcal{A}_c\cap  \text{Cube}(\vv{h}_1, \dots, \vv{h}_p)) \right| \\
        & = \sum_{\vv{h}_1, \dots, \vv{h}_p} \bigg|\int_{\vv{u}\in \mathcal{A}_c\cap \text{Cube}(\vv{h}_1, \dots, \vv{h}_p)} \pi_c(\vv{u}) \, d\vv{u} \\
        &\quad\quad\quad - 
\frac{\bar{\nu}_{\vv{h}_{1:p}}\big(\mathcal{A}_c \big)}{\bar{\nu}_{\vv{h}_{1:p}}\big(\text{Cube}(\vv{h}_1, \dots, \vv{h}_p)\big)} \int_{\vv{u}\in \text{Cube}(\vv{h}_1, \dots, \vv{h}_p)} \pi_c(\vv{u}) \, d\vv{u} \bigg|\\
& \le \sum_{\vv{h}_1, \dots, \vv{h}_p} \int_{\vv{u}\in \mathcal{A}_c\cap \text{Cube}(\vv{h}_1, \dots, \vv{h}_p)}  |\pi_c(\vv{u}) - \widetilde{\pi}(\vv{u}) | \, d\vv{u}  \\
& \le \varepsilon   \sqrt{p} \delta_n K_{1} e \times \int_{\vv{x}\in \mathbb{R}_c^{p}} \pi_c(\vv{x})d\vv{x} .
    \end{split}    
\end{equation*}
Similarly, for each $(S, \vv{d}_S) \in \Theta$,
\begin{equation*}
    \begin{split}
        & |\mathbb{P}(\vv{\boldsymbol{X}} \in \mathcal{A}(S, \vv{d}_S) ) -  \widetilde{\mu}(\mathcal{A}(S, \vv{d}_S))|  \\
        & \le \sum_{\vv{h}_1, \dots, \vv{h}_p} \bigg| \mathbb{P}(\vv{\boldsymbol{X}}_{S} = \vv{d}_S) \int_{\vv{u}\in \mathcal{A}(S, \vv{d}_S) \ \cap \ \text{Cube}(\vv{h}_1, \dots, \vv{h}_p)} \pi_{S, \vv{d}_S}(\vv{u}) \, d\vv{u}_{S^c} \\
        &\quad\quad\quad - 
\frac{\bar{\nu}_{\vv{h}_{1:p}}\big(\mathcal{A}(S, \vv{d}_S)  \big)}{\bar{\nu}_{\vv{h}_{1:p}}\big(\text{Cube}(\vv{h}_1, \dots, \vv{h}_p)\big)} \times \mathbb{P}(\vv{\boldsymbol{X}}_{S} = \vv{d}_S) \times  \int_{\vv{u}\in \text{Cube}(\vv{h}_1, \dots, \vv{h}_p)} \pi_{S, \vv{d}_S}(\vv{u}) \, d\vv{u}_{S^c} \bigg|\\
& \le \sum_{\vv{h}_1, \dots, \vv{h}_p}  \int_{\vv{u}\in \mathcal{A}(S, \vv{d}_S) \ \cap \ \text{Cube}(\vv{h}_1, \dots, \vv{h}_p)} \left| \mathbb{P}(\vv{\boldsymbol{X}}_{S} = \vv{d}_S) \times  \pi_{S, \vv{d}_S}(\vv{u}) - \widetilde{\pi}(\vv{u}) \right| d\vv{u}_{S^c} \\
        & \le \varepsilon   \sqrt{p} \delta_n K_{1} e \times \mathbb{P}(\vv{\boldsymbol{X}}_{S} = \vv{d}_S)\times\int_{\vv{x}\in \mathbb{R}^{p}(S, \vv{d}_S)} \pi_{S, \vv{d}_S}(\vv{x})d\vv{x}_{S^c}.
    \end{split}    
\end{equation*}

 With these and \eqref{prop2.16.bb}, we derive that for every measurable $\mathcal{A} \subset [-\delta_n, \delta_n]^p$, 
\begin{equation}
    \begin{split}\label{prop2.16.b}
          & |\mathbb{P}(\vv{\boldsymbol{X}} \in \mathcal{A} ) -  \widetilde{\mu}(\mathcal{A})| \\          
          &\le  \varepsilon   \sqrt{p} \delta_n K_{1} e \times (\int_{\vv{x}\in \mathbb{R}_c^{p}} \pi_c(\vv{x})d\vv{x} + \sum_{(S, \vv{d}_S) \in\Theta}\mathbb{P}(\vv{\boldsymbol{X}}_{S} = \vv{d}_S)\times\int_{\vv{x}\in \mathbb{R}^{p}(S, \vv{d}_S)} \pi_{S, \vv{d}_S}(\vv{x})d\vv{x}_{S^c})\\
          &\le \varepsilon   \sqrt{p} \delta_n K_{1} e ,
    \end{split}    
\end{equation}
where the third inequality follows from \eqref{stationary.dist}. This establishes an upper bound on the second term on the RHS of \eqref{theorem1.31}.

Now, we turn to establish an upper bound on the first term on the RHS of \eqref{theorem1.31}. By Condition~\ref{mean.regression} and the definitions of $\widehat{\mu}$ and $\widetilde{\mu}$ respectively in \eqref{estimate.stationary.dist} and \eqref{prop2.15}, it holds on $E^{\dagger}$ (given in Condition~\ref{mean.regression}), for every measurable $\mathcal{A} \subset [-\delta_n, \delta_n]^p$,
\begin{equation}
    \begin{split}\label{prop2.17}
        &\left|\widetilde{\mu}(\mathcal{A}) -  \widehat{\mu}(\mathcal{A}) \right|\\               
        & \le \sum_{ \omega_q(\vv{h}_{q}) <  m_q \text{ for } 1\le  q\le p } \left|\prod_{j=1}^p e_{j,\omega_j(\vv{h}_{j})}(\vv{h}_{j-1}, \dots, \vv{h}_{1}) - \prod_{j=1}^p \widehat{e}_{j,\omega_j(\vv{h}_{j})}(\vv{h}_{j-1}, \dots, \vv{h}_{1}) \right|\\
        & \le \sum_{\omega_q(\vv{h}_{q}) < m_q \text{ for } 1\le q\le p} \bigg(\prod_{j=1}^p e_{j,\omega_j(\vv{h}_j)}(\vv{h}_{j-1}, \dots, \vv{h}_{1}) \\  
        & \quad\quad \times \sum_{j=1}^p \frac{\sqrt{T}}{d_{\min}} 
         \times \max_{1\le k\le p} \,\max_{\omega_q(\vv{z}_{q}) < m_q \text{ for } 1\le q\le k}\left(\frac{\widehat{e}_{k,\omega_k(\vv{z}_k)}(\vv{z}_{k-1}, \dots, \vv{z}_{1})}{e_{k,\omega_k(\vv{z}_k)}(\vv{z}_{k-1}, \dots, \vv{z}_{1})} \vee 1\right)^{p} \bigg)\\
        & \le \frac{p\sqrt{T}}{d_{\min}} \times \left(1 + \frac{ \sqrt{T}}{ d_{\min}}\right)^p\\
        & \le \frac{ep\sqrt{T}}{d_{\min}},
    \end{split}
\end{equation}
where \( \omega_j(\vv{h}_j) = q \) if the \( q \)th coordinate of \( \vv{h}_j \) equals one, 
and the summation is taken over all 
\((\vv{h}_1, \dots, \vv{h}_p) \in \{0,1\}^{m_1} \times \cdots \times \{0,1\}^{m_p}\),
with each \(\vv{h}_j\) being one-hot and satisfying the specified requirements. Here, we apply the identity for positive $a_i$'s and $b_i$'s:
\begin{equation}
    \begin{split}
        \prod_{i=1}^p a_i - \prod_{i=1}^p b_i 
& = \sum_{j=1}^p (a_j - b_j)
\Biggl(\prod_{i=1}^{j-1} b_i\Biggr)
\Biggl(\prod_{i=j+1}^p a_i\Biggr)\\
& = \Biggl(\prod_{i=1}^p a_i\Biggr) \times \sum_{j=1}^p \frac{(a_j - b_j)}{a_j}
\Biggl(\prod_{i=1}^{j-1} \frac{b_i}{a_i}\Biggr),
    \end{split}
\end{equation}
along with Condition~\ref{mean.regression} and the assumption that
\begin{equation}
    \label{corollary1.3}
    \min_{j, l < m_j, \omega_q(\vv{h}_{q}) \not = m_q \text{ for } 1\le q< j }
    e_{j,l}(\vv{h}_{j-1}, \dots, \vv{h}_{1})
    \ge d_{\min},
\end{equation}
to derive the second inequality.  
The third inequality follows from Condition~\ref{mean.regression},
$$\sum_{\vv{h}_1, \dots, \vv{h}_p}
 \prod_{j=1}^p e_{j,\omega_j(h_j)}(\vv{h}_{j-1}, \dots, \vv{h}_1)
 = 1,$$
 the assumption~\eqref{corollary1.3}, and the assumption that \(\widehat{e}_{jl}(\vv{h}_{j-1}, \dots, \vv{h}_1) = 0\) 
whenever \(\omega_q(\vv{h}_q) = m_q\) for some \(1 \le q < j\) or \(l = m_j\).  
Finally, we use
\[
\Bigl(1+\frac{\sqrt{T}}{d_{\min}}\Bigr)^p
= \exp\Bigl(p\log_e \big(1+\tfrac{\sqrt{T}}{d_{\min}}\big)\Bigr)
\le \exp \Bigl(\tfrac{p\sqrt{T}}{d_{\min}}\Bigr)
\le e,
\]
when \(\tfrac{p\sqrt{T}}{d_{\min}} \le 1\), to obtain the fourth inequality.

By \eqref{theorem1.31}, \eqref{prop2.16.b}--\eqref{prop2.17}, and the assumption $\sqrt{p}\delta_n \varepsilon K_{1} \le 1$, we derive that for every measurable $\mathcal{A} \subset [-\delta_n, \delta_n]^p$, on $E^{\dagger}$,
\begin{equation}\label{prop2.16}    
    |\mathbb{P}(\vv{\boldsymbol{X}} \in \mathcal{A} ) -  \widehat{\mu}(\mathcal{A})| \le \varepsilon   \sqrt{p} \delta_n K_{1} e +  \frac{ep\sqrt{T}}{d_{\min}}.
\end{equation}

By the definitions of \eqref{estimate.stationary.dist} and the assumption that \(\widehat{e}_{jl}(\vv{h}_{j-1}, \dots, \vv{h}_1) = 0\) 
whenever \(\omega_q(\vv{h}_q) = m_q\) for some \(1 \le q < j\) or \(l = m_j\), we have that $\widehat{\mu}(\mathcal{A}) = \widehat{\mu}(\mathcal{A}\cap [-\delta_n, \delta_n]^{p})$. By this, Condition~\ref{mean.regression}, and \eqref{prop2.16}, we deduce that, with probability at least \( 1 - \Delta \) (on \( E^{\dagger} \)), the following holds for all large $n$ and every \( \mathcal{A} \in \mathcal{R}^p \).
\begin{equation}
    \begin{split}
        & | \mathbb{P}(\vv{\boldsymbol{X}} \in \mathcal{A}) - \widehat{\mu}(\mathcal{A})| \\
        & \le  \mathbb{P}(\vv{\boldsymbol{X}} \in \mathcal{A}\backslash [-\delta_n, \delta_n]^{p} ) +  \left| \mathbb{P}(\vv{\boldsymbol{X}} \in \mathcal{A}\cap [-\delta_n, \delta_n]^{p} ) - \widehat{\mu}(\mathcal{A}) \right| \\
        & =  \mathbb{P}(\vv{\boldsymbol{X}} \in \mathcal{A}\backslash [-\delta_n, \delta_n]^{p} ) +  \left| \mathbb{P}(\vv{\boldsymbol{X}} \in \mathcal{A}\cap [-\delta_n, \delta_n]^{p} ) - \widehat{\mu}(\mathcal{A}\cap [-\delta_n, \delta_n]^{p}) \right| \\
        & \le \mathbb{P}(\vv{\boldsymbol{X}}\not\in [-\delta_n, \delta_n]^p ) + \varepsilon   \sqrt{p} \delta_n K_{1} e + \frac{ep\sqrt{T}}{d_{\min}}  .
    \end{split}
\end{equation}

This completes the proof of the first assertion of Theorem~\ref{theorem1}. We now turn to establishing the upper bound for the conditional probability deviation.

Recall that 
$$\zeta = \frac{ep\sqrt{T}}{d_{\min}}
        + \varepsilon   \sqrt{p} \delta_n K_{1} e + \mathbb{P}(\vv{\boldsymbol{X}}\not\in [-\delta_n, \delta_n]^p ).$$
For any $\vartheta > 1$, we have the following result. Assume 
$a,b  > 0$ and $x,y \in (0,1]$,
with $y \ge \vartheta \zeta > \zeta$, $x \le y$, $a \le b$, and 
$|x-a| \le \zeta$, $|y-b| \le \zeta$.
Then
\begin{equation}
\begin{split}\label{prop2.18}
\left|\frac{x}{y} - \frac{a}{b}\right|
&= \left|\frac{x(b-y) + y(x-a)}{yb}\right| \\[6pt]
&\le \frac{|x||b-y| + |y||x-a|}{|y\,b|} \\[6pt]
&\le \frac{|x|\zeta + |y|\zeta}{|y|\,|b|} 
 \\[6pt]
&\le \frac{|x|\zeta + |y|\zeta}{|y|\,(y - |y-b|)} 
 \\[6pt]
&\le \frac{y\zeta + y\zeta}{y(y-\zeta)} 
\\[6pt]
& \le \frac{2}{\vartheta - 1}.
\end{split}
\end{equation}

Now, applying the first result of Theorem~\ref{theorem1} and \eqref{prop2.18} with  $x = \mathbb{P}(\vv{\boldsymbol{X}}_Q \in \mathcal{B}, \vv{\boldsymbol{X}}_{Q^c} \in \mathcal{H})$, $y = \mathbb{P}(\vv{\boldsymbol{X}}_{Q^c} \in \mathcal{H})$, $a = \widehat{\mu}(\{\vv{x}\in\mathbb{R}^p \mid \vv{x}_Q \in \mathcal{B} ,  \vv{x}_{Q^c} \in \mathcal{H}\} )$, and $b = \widehat{\mu}(\{\vv{x}\in\mathbb{R}^p \mid \vv{x}_{Q^c} \in \mathcal{H}\} )$, we obtain the desired result for the second assertion of Theorem~\ref{theorem1}, thereby completing its proof.

\subsection{Proof of Corollary~\ref{corollary1}}\label{SecA.2}

We show that the assumptions of Theorem~\ref{theorem1} are satisfied under the setting of Corollary~\ref{corollary1}. In particular, we verify Condition~\ref{mean.regression} for this setting and then apply Theorem~\ref{theorem1} to conclude the result. We begin by deriving a positive lower bound on $d_{\min}>0$ as required by Condition~\ref{mean.regression}.

  By the definition of $e_{jl}$ in \eqref{band} and Condition~\ref{sparsity}(a) or \ref{sparsity}(b), it holds that
\begin{equation}
\begin{split}\label{corollary1.5}
    e_{jl}(\vv{h}_{j-1}, \dots, \vv{h}_{1} ) & = \mathbb{E}[ \boldsymbol{H}_{jl} \mid \vv{\boldsymbol{H}}_{j-1} = \vv{h}_{j-1}, \dots, \vv{\boldsymbol{H}}_{1} = \vv{h}_{1}]
    \\
    & = \mathbb{E}[ \boldsymbol{H}_{jl} \mid \vv{\boldsymbol{H}}_q = \vv{h}_q, q\in S_{j}]\\
    & = \frac{\mathbb{E}[\boldsymbol{H}_{jl} \times \prod_{ q\in S_{j}}\boldsymbol{1}\{\vv{\boldsymbol{H}}_{q} = \vv{h}_q \} ]}{\mathbb{P}(\cap_{ q\in S_{j}}\{\vv{\boldsymbol{H}}_{q} = \vv{h}_q \}) }.
    \end{split}
\end{equation}
Notation follows Section~\ref{Sec2.2}, with $\mathbb{E}[\cdots \mid \emptyset] = \mathbb{E}(\cdots)$, $\prod_{j \in \emptyset} \boldsymbol{1}\{\cdots\} = 1$, and $\mathbb{P}(\emptyset) = 1$; we will show shortly that the denominator in \eqref{corollary1.5} is strictly positive, so the right-hand side is well defined.

 In addition, in light of the assumption that the joint density of $\boldsymbol{X}_J$ is uniformly bounded above and below by positive constants for every subset $J\subset \{1, \dots, p\}$ with $\texttt{\#}J \le s_0 + 1$, and the definitions of $L_{1n}$ and $L_{2n}$, we deduce that there are some positive constants $C_{1}$ and $C_{2}$ such that for every $1\le l_j < m_j$, every $n\ge 1$, and every $J\subset \{1, \dots, p\}$ with $1\le \texttt{\#}J \le 1+ s_0$,
 \begin{equation}
      \label{corollary1.1}
  (L_{1n})^{\texttt{\#}J} C_1 \le \mathbb{E}\left(\prod_{j\in J}\boldsymbol{H}_{jl_j}\right) \le (L_{2n})^{\texttt{\#}J} C_2.
  \end{equation}

By \eqref{corollary1.5}--\eqref{corollary1.1}, and the assumptions that $\texttt{\#}S_j\le s_0$, $L_{2n}\ge L_{1n}$, and $\sup_{n\ge 1} L_{2n}L_{1n}^{-1}  $ is finite, there exists a generic constant $C >0$ such that for every $n\ge 1$,
\begin{equation}    
    \begin{split}\label{corollary1.6}
    \min_{j, l < m_j, \omega_q(\vv{h}_{q}) \not = m_q \text{ for } 1\le q< j } e_{jl}(\vv{h}_{j-1}, \dots, \vv{h}_{1} )  & \ge \frac{(L_{1n})^{s_0 + 1} C_1}{(L_{2n})^{s_0} C_2}\\
   & \ge L_{1n} C \\
   & \eqqcolon d_{\min}.
    \end{split}
\end{equation}

We now turn to establishing the high probability deviation upper bound as required by the remaining part of Condition~\ref{mean.regression}. Analogously to \eqref{corollary1.6}, for some generic constant \( C > 0 \), it holds for every $n\ge1$ that
\begin{equation}
    \label{corollary1.13}
    \begin{split}        
   \max_{j, l < m_j, \omega_q(\vv{h}_{q}) \not = m_q \text{ for } 1\le q< j } e_{jl}(\vv{h}_{j-1}, \dots, \vv{h}_{1} ) & \le 
    \frac{(L_{2n})^{s_0 + 1} C_2}{(L_{1n})^{s_0} C_1}\\
   & \le L_{2n} C.
    \end{split}
\end{equation}
Also, since $\boldsymbol{H}_{jl}\in\{0,1\}$ almost surely, a direct calculation shows that for every $\iota\ge 0$, $J \subset \{1, \dots, p\}$, and $1\le l_{j}\le m_j$,
  \begin{equation}
      \label{corollary1.2}
  \mathbb{E}\left|\prod_{j\in J}\boldsymbol{H}_{jl_j} - \mathbb{E}\left( \prod_{j\in J}\boldsymbol{H}_{jl_j} \right) \right|^{2+\iota} \le  \mathbb{E}\left( \prod_{j\in J}\boldsymbol{H}_{jl_j} \right) .
\end{equation}
Here, we recall that $\boldsymbol{H}_{1jl}, \dots, \boldsymbol{H}_{njl}$ are identically distributed as $\boldsymbol{H}_{jl}$.
We also define sample one-hot $\vv{\boldsymbol{H}}_{tj} = (\boldsymbol{H}_{tj1}, \dots, \boldsymbol{H}_{tjm_j})^{\top}$, 
whose stationary population counterpart is denoted by $\vv{\boldsymbol{H}}_{j} = (\boldsymbol{H}_{j1}, \dots, \boldsymbol{H}_{jm_j})^{\top}$.
Let any $j\in \{1, \dots, p\}$, $l\in \{1, \dots, m_j -1\}$, and one-hot $(\vv{h}_{1}, \dots, \vv{h}_{p})$ with $\omega_q(\vv{h}_q) < m_q$ be fixed, and define
\begin{equation}
    \label{corollary1.21}
    \boldsymbol{Z}_t = \sqrt{\frac{\max_{1\le q\le p} m_{q}}{\mathbb{E}(\prod_{ q\in S_{j}}\boldsymbol{1}\{\vv{\boldsymbol{H}}_{q} = \vv{h}_q \}) }} \times \boldsymbol{H}_{tjl} \times \prod_{ q\in S_{j}}\boldsymbol{1}\{\vv{\boldsymbol{H}}_{tq} = \vv{h}_q \}.
\end{equation}
By \eqref{corollary1.1} and \eqref{corollary1.2},  it holds that for any $0<\iota\le 1$,
\begin{equation}
    \label{corollary1.7}
    \mathbb{E}| \boldsymbol{Z}_t - \mathbb{E}(\boldsymbol{Z}_t )|^{2+\iota} \le (\max_{1\le q\le p} m_{q})^{1+\iota/2}(L_{1n})^{-\texttt{\#}S_{j} - \frac{\texttt{\#}S_{j} \times \iota }{2} } C_{1}^{-1-\iota/2} (L_{2n})^{\texttt{\#}S_{j} + 1} C_2 \le C_5 L_{1n}^{- \frac{1}{2}(s_0+1) \iota } 
\end{equation}
for some generic constant $C_5>0$, since $\max_{1\le j\le p} m_j \le L_{1n}^{-1} C$ for some constant $C>0$ and  $\sup_{n\ge 1} L_{2n}L_{1n}^{-1}  $ is finite. By \eqref{corollary1.7}, the geometric strong mixing property of $\{\boldsymbol{Z}_t\}$ with 
$\alpha(k) \le \gamma_0 e^{-\gamma_1 k}$, 
and the standard covariance inequality for $\alpha$-mixing sequences~\citep{rio1993covariance}, 
we deduce that
\begin{equation}
    \label{corollary1.8}
     2\sum_{k=0}^{\infty} |\mathrm{Cov}(\boldsymbol{Z}_0, \boldsymbol{Z}_{k})| \le C\times L_{1n}^{\frac{-\iota(s_0+1)}{2+\iota}} \eqqcolon v^2
\end{equation}
for some generic constant $C>0$. Here, the mixing property of $\{\boldsymbol{Z}_t\}$ holds because $\boldsymbol{Z}_t$ is a measurable function of $(\boldsymbol{X}_{il}, l\in S_j\cup\{j\})$ 
and $\{(\boldsymbol{X}_{il}, l\in S_j\cup\{j\})\}_i$ is strongly mixing with coefficient $\alpha(k)$ according to Condition~\ref{alpha}.

By Theorem 2 of \citep{merlevede2009bernstein}, \eqref{corollary1.8}, that $\{\boldsymbol{Z}_t \}$ is geometrically strongly mixing with $\alpha(k) \le \gamma_0 e^{-\gamma_1 k}$ for all $k\ge 1$, and that
$$|\boldsymbol{Z}_t - \mathbb{E}(\boldsymbol{Z}_t)| \le \sqrt{\frac{\max_{1\le j\le p} m_{j}}{\mathbb{E}(\prod_{ q\in S_{j}}\boldsymbol{1}\{\vv{\boldsymbol{H}}_{q} = \vv{h}_q \}) }} \le C L_{1n}^{-1/2 - \texttt{\#}S_j / 2} \eqqcolon Q_n$$
for some generic constant $C>0$ almost surely (derived by using similar calculations for \eqref{corollary1.7}), there exists some $C_6 >0$ such that for all $n\ge 2$, 
\begin{equation}
    \label{corollary1.9}
    \mathbb{P}\left( \left|  \sum_{t=1}^{n} (\boldsymbol{Z}_t - \mathbb{E}(\boldsymbol{Z}_t))\right| \ge x \right) \le \exp{\left(\frac{C_6 x^2}{v^2 n + Q_n^2 + xQ_n (\log{n})^2} \right)}.
\end{equation}

By \eqref{corollary1.9}, setting $x = v \sqrt{n} (\log n)^4$ and using the assumption 
$L_{1n}^{-1 - \texttt{\#}S_j} \le L_{1n}^{-1 - s_0} \le n \le v^2 n$ for all large $n$ (the last inequality follows since $L_{1n}\le 1$ for all large $n$), 
it follows that, for some constant $C > 0$ and all sufficiently large $n$,
\begin{equation}
    \label{corollary1.10}
    \mathbb{P}\left( \left|  n^{-1}\sum_{t=1}^{n} (\boldsymbol{Z}_t - \mathbb{E}(\boldsymbol{Z}_t))\right| \ge  v  n^{-1/2} (\log{n})^4 \right) \le \exp{\left(-C(\log{n})^2 \right)}.
\end{equation}
By letting $\boldsymbol{G}_t = \sqrt{\frac{1}{\mathbb{P}(\cap_{ q\in S_{j}}\{\vv{\boldsymbol{H}}_{q} = \vv{h}_q \}) }}  \times \prod_{ q\in S_{j}}\boldsymbol{1}\{\vv{\boldsymbol{H}}_{tq} = \vv{h}_q \}$ and a similar argument for deriving \eqref{corollary1.10}, we obtain some generic constant $C^{'}>0$ such that for all large $n$,
\begin{equation}
    \label{corollary1.15}
    \mathbb{P}\left( \left|  n^{-1}\sum_{t=1}^{n} (\boldsymbol{G}_t - \mathbb{E}(\boldsymbol{G}_t))\right| \ge  v  n^{-1/2} (\log{n})^4 \right) \le \exp{\left(-C^{'} (\log{n})^2 \right)}.
\end{equation}

Recall the definition of $\widehat{e}_{jl}(\vv{h}_{j-1}, \dots, \vv{h}_{1})$ given in \eqref{tree}. In \eqref{corollary1.12}--\eqref{corollary1.16} below, we establish a bound on 
\(| e_{jl}(\vv{h}_{j-1}, \dots, \vv{h}_{1}) - \widehat{e}_{jl}(\vv{h}_{j-1}, \dots, \vv{h}_{1}) |\),
under the assumption that the complementary events of the left-hand sides of
\eqref{corollary1.10} and \eqref{corollary1.15} hold.
\begin{equation}
\begin{split}\label{corollary1.11}
& | e_{jl}(\vv{h}_{j-1}, \dots, \vv{h}_{1} ) - \widehat{e}_{jl}(\vv{h}_{j-1}, \dots, \vv{h}_{1} )|    \\
& \le \left|\frac{\mathbb{E}[\boldsymbol{H}_{jl} \times \prod_{ q\in S_{j}}\boldsymbol{1}\{\vv{\boldsymbol{H}}_{q} = \vv{h}_q \} ]}{\mathbb{P}(\cap_{ q\in S_{j}}\{\vv{\boldsymbol{H}}_{q} = \vv{h}_q \}) } - \frac{ n^{-1} \sum_{t =1 }^n \boldsymbol{H}_{tjl} \times \prod_{ q\in S_{j}}\boldsymbol{1}\{\vv{\boldsymbol{H}}_{tq} = \vv{h}_q \}}{ n^{-1}\sum_{t =1 }^n \prod_{ q\in S_{j}}\boldsymbol{1}\{\vv{\boldsymbol{H}}_{tq} = \vv{h}_q\}  } \right| \\
& \le \left|\frac{\mathbb{E}[\boldsymbol{H}_{jl} \times \prod_{ q\in S_{j}}\boldsymbol{1}\{\vv{\boldsymbol{H}}_{q} = \vv{h}_q \} ] -  n^{-1} \sum_{t =1 }^n \boldsymbol{H}_{tjl} \times \prod_{ q\in S_{j}}\boldsymbol{1}\{\vv{\boldsymbol{H}}_{tq} = \vv{h}_q \}}{\mathbb{P}(\cap_{ q\in S_{j}}\{\vv{\boldsymbol{H}}_{q} = \vv{h}_q \}) }  \right| \\
& \quad + \widehat{e}_{jl}(\vv{h}_{j-1}, \dots, \vv{h}_{1} ) \times  \left|\frac{ \mathbb{P}(\cap_{ q\in S_{j}}\{\vv{\boldsymbol{H}}_{q} = \vv{h}_q \}) -  n^{-1} \sum_{t =1 }^n \prod_{ q\in S_{j}}\boldsymbol{1}\{\vv{\boldsymbol{H}}_{tq} = \vv{h}_q \}}{\mathbb{P}(\cap_{ q\in S_{j}}\{\vv{\boldsymbol{H}}_{q} = \vv{h}_q \}) }  \right|.
\end{split}
\end{equation}
When the complementary event of the left-hand sides of
\eqref{corollary1.10} holds, the first term on the RHS of \eqref{corollary1.11} is bounded by 
\begin{equation}
    \begin{split}
        \label{corollary1.12}
        v  n^{-1/2} (\log{n})^4 \sqrt{\frac{1}{ \mathbb{P}(\cap_{ q\in S_{j}}\{\vv{\boldsymbol{H}}_{q} = \vv{h}_q \}) \times \max_{1\le j\le p} m_j } }\le C^{'} v  n^{-1/2} (\log{n})^4 L_{2n}^{\frac{1 - \texttt{\#}S_j}{2}},
    \end{split}
\end{equation}
for some generic constant $C^{'}>0$ for all large $n$, where we use $\max_{1\le j\le p} m_j  \ge L_{2n}^{-1}C$ for some constant $C>0$, \eqref{corollary1.1}, and the assumption that $\sup_{n\ge 1} L_{2n}L_{1n}^{-1}  $ is finite. In addition, when the complementary events of the left-hand sides of \eqref{corollary1.10} and \eqref{corollary1.15} hold, by the recursive argument together with
$$\widehat{e}_{jl}(\vv{h}_{j-1}, \dots, \vv{h}_{1} ) \le |e_{jl}(\vv{h}_{j-1}, \dots, \vv{h}_{1} ) - \widehat{e}_{jl}(\vv{h}_{j-1}, \dots, \vv{h}_{1} )| + e_{jl}(\vv{h}_{j-1}, \dots, \vv{h}_{1} ),$$
that $0\le \widehat{e}_{jl}(\vv{h}_{j-1}, \dots, \vv{h}_{1} ) \le 1$, \eqref{corollary1.1}, \eqref{corollary1.13}, and the assumption that $\sup_{n\ge 1} L_{2n}L_{1n}^{-1}  $ is finite, the second term on the RHS of \eqref{corollary1.11} is bounded by 
\begin{equation}
    \begin{split}
        \label{corollary1.16}
        & C^{'} v  n^{-1/2} (\log{n})^4\times e_{jl}(\vv{h}_{j-1}, \dots, \vv{h}_{1} ) \sqrt{\frac{1}{ \mathbb{P}(\cap_{ q\in S_{j}}\{\vv{\boldsymbol{H}}_{q} = \vv{h}_q \})  }} \\
        & \le C^{''} v  n^{-1/2} (\log{n})^4\times L_{2n}\sqrt{\frac{1}{ \mathbb{P}(\cap_{ q\in S_{j}}\{\vv{\boldsymbol{H}}_{q} = \vv{h}_q \})  }} \\
        & \le C^{'''} v  n^{-1/2} (\log{n})^4 L_{2n}^{1 - \frac{ \texttt{\#}S_j}{2}}.
    \end{split}
\end{equation}
for positive generic constants $C^{'}, C^{''}, C^{'''}$ for all large $n$.

Now, let us deal with the number of deviation bounds needed. The transformed process $\{\boldsymbol{Z}_t\}$ given by \eqref{corollary1.21}
is indexed by $j\in \{1, \dots, p\}$, $l\in \{1, \dots, m_j -1\}$, and $(\vv{h}_q, q\in S_{j})$. As a result, the total number of means to be estimated is bounded by
\begin{equation}    
    \begin{split} \label{corollary1.17}
    2\times (m_1 + \sum_{j=2}^p \sum_{l=1}^{m_j} \prod_{q\in S_{j}} m_q ) & \le 2m_1 + 2p (\max_{1\le j\le p}m_j)^{1+s_0} \\
    & \le 4p (\max_{1\le j\le p}m_j)^{1+s_0} \\
    & \le 4n^{2+s_0},
    \end{split}
\end{equation}
since $p\le n$ (due to our assumption $pn^{-1/2} (\log{n})^5 L_{2n}^{-\frac{1}{2} - \frac{s_0}{2} - b_0}\le 1$) and $m_{j}\le n$. The same upper bound holds for the number of $\{\boldsymbol{G}_t\}$. Applying \eqref{corollary1.10}--\eqref{corollary1.15} and \eqref{corollary1.17}, we conclude that the union of the events corresponding to the left-hand sides of \eqref{corollary1.10}--\eqref{corollary1.15} has negligible probability as \(n\) increases; denote this probability by \(\Delta = \Delta_n\).

By this, \eqref{corollary1.8}, \eqref{corollary1.10}--\eqref{corollary1.16}, and the assumption that $L_{2n} = o(1)$, with
\begin{equation}
    \label{T.n}
    \sqrt{T}\coloneqq C L_{1n}^{\frac{-\iota(s_0+1)}{2+\iota}}  n^{-1/2} (\log{n})^4 L_{2n}^{\frac{1 - s_0}{2}}
\end{equation}
for some sufficiently large constant $C>0$, we have Condition~\ref{mean.regression} with $\Delta = \Delta_n$ approaching zero as $n$ increases. 

In addition, by the assumption that $\sup_{n\ge 1} L_{2n}L_{1n}^{-1}  $ is finite, and \eqref{corollary1.6}, it holds for all large $n$ that 
    \begin{equation}
    \label{corollary1.20}
    \frac{ep\sqrt{T}}{d_{\min}} \le pn^{-1/2} (\log{n})^5 L_{2n}^{-\frac{1}{2} - \frac{s_0}{2} - b_0}\le 1,
\end{equation}
where \(\iota \in (0,1]\) is arbitrary, $s_0$ is finite, and \(b_0 > 0\) is chosen such that \(b_0 = \frac{\iota\times (s_0+1)}{2 + \iota}\).

In summary, Condition~\ref{mean.regression} holds for all sufficiently large \(n\), with
\(d_{\min} = d_{\min,n}\) given by~\eqref{corollary1.6} and \(T = T_n\) given by~\eqref{T.n}. 
Since \(\varepsilon = \varepsilon_n = L_{2n}\sqrt{p}\) by definition of \(L_{2n}\), the assumption 
\(L_{2n} p \delta_n = o(1)\) implies \(\varepsilon \le (\delta_n \sqrt{p}\, K_{1})^{-1}\) for all large \(n\). 
Together with~\eqref{corollary1.20}, the assumption of Condition~\ref{conti} with \(R = \delta_n\), and an application of Theorem~\ref{theorem1}, we obtain the desired conclusion of Corollary~\ref{corollary1}.

\subsection{Proof of Corollary~\ref{corollary2}}\label{proof.corollary2}

We verify that Condition~\ref{mean.regression} holds for all large $n$ in the setting of Corollary~\ref{corollary2} and then apply Theorem~\ref{theorem1} to establish Corollary~\ref{corollary1}. We begin by deriving a positive constant lower bound \(d_{\min} > 0\) as required by Condition~\ref{mean.regression}.

By the definition of $e_{jl}$ in \eqref{band}, the discrete variables assumption, and Condition~\ref{sparsity}(a), it holds that
\begin{equation}
\begin{split}\label{corollary2.1}
    e_{jl}(\vv{h}_{j-1}, \dots, \vv{h}_{1} ) & = \mathbb{E}[ \boldsymbol{H}_{jl} \mid \vv{\boldsymbol{H}}_{j-1} = \vv{h}_{j-1}, \dots, \vv{\boldsymbol{H}}_{1} = \vv{h}_{1}]
    \\
    & = \mathbb{E}[ \boldsymbol{H}_{jl} \mid \vv{\boldsymbol{H}}_q = \vv{h}_q, q\in S_{j}]\\
    & = \frac{\mathbb{E}[\boldsymbol{H}_{jl} \times \prod_{ q\in S_{j}}\boldsymbol{1}\{\vv{\boldsymbol{H}}_{q} = \vv{h}_q \} ]}{\mathbb{P}(\cap_{ q\in S_{j}}\{\vv{\boldsymbol{H}}_{q} = \vv{h}_q \}) }.
    \end{split}
\end{equation}

By \eqref{corollary2.1} and the assumption of upper and lower bounds on $\mathbb{E}(\prod_{j\in S}\boldsymbol{H}_{jl_j})$, we deduce that
$$\min_{1\le j\le p; \, 1\le l < m_j ; \, \omega_q(\vv{h}_{q}) < m_q, 1\le q< j } e_{jl}(\vv{h}_{j-1}, \dots, \vv{h}_{1}) \ge d_{\min}$$ 
for some constant $d_{\min} > 0$, which ensures the first part of Condition~\ref{mean.regression}.

Next, we establish the remaining part of Condition~\ref{mean.regression} for applying Theorem~\ref{theorem1}.
Let any $j\in \{1, \dots, p\}$, $l\in \{1, \dots, m_j -1\}$, and one-hot $(\vv{h}_{1}, \dots, \vv{h}_{p})$ with $\omega_q(\vv{h}_q) < m_q$ be fixed, and define
\begin{equation}
    \label{corollary2.21}
    \boldsymbol{Z}_t = \sqrt{\frac{1}{\mathbb{E}(\prod_{ q\in S_{j}}\boldsymbol{1}\{\vv{\boldsymbol{H}}_{q} = \vv{h}_q \}) }} \times \boldsymbol{H}_{tjl} \times \prod_{ q\in S_{j}}\boldsymbol{1}\{\vv{\boldsymbol{H}}_{tq} = \vv{h}_q \}.
\end{equation}
The transformed process $\{\boldsymbol{Z}_t\}$ is indexed by $j \in \{1, \dots, p\}$, $l \in \{1, \dots, m_j-1\}$, and $(\vv{h}_q, q \in S_j)$. Consequently, the total number of distinct transformed processes is bounded as in \eqref{corollary1.17}.

By the assumption of upper and lower bounds on $\mathbb{E}(\prod_{j\in S}\boldsymbol{H}_{jl_j})$ with $\texttt{\#}S \le s_0 + 1$, \(\max_{1 \le j \le p} \texttt{\#}S_j \le s_0\), and that $s_0$ is a constant, it holds that for any $0<\iota\le 1$,
\begin{equation}
    \label{corollary2.7}
    \mathbb{E}| \boldsymbol{Z}_t - \mathbb{E}(\boldsymbol{Z}_t )|^{2+\iota} \le C
\end{equation}
for some generic constant $C>0$. Moreover, by an argument analogous to that used in \eqref{corollary1.8}, we obtain that, for Corollary~\ref{corollary2}, the upper bound $v^2$ on $2\sum_{k=0}^{\infty} |\mathrm{Cov}(\boldsymbol{Z}_0, \boldsymbol{Z}_{k})|$ is constant and that $|\boldsymbol{Z}_t - \mathbb{E}(\boldsymbol{Z}_t)|\le C \eqqcolon Q_n$ for some constant $ C > 0$. Setting $x = n^{-1/2} (\log n)^4$ and applying the same arguments as in \eqref{corollary1.9}--\eqref{corollary1.10} and \eqref{corollary1.11}--\eqref{corollary1.16} yields analogous results here (details omitted). Also, the results in \eqref{corollary1.15} carry over directly. Combining these with \eqref{corollary1.17} and $\sqrt{T} = \sqrt{T_n}\coloneqq C n^{-1/2} (\log n)^4$ for a sufficiently large constant $C>0$, we obtain Condition~\ref{mean.regression} with $\Delta = \Delta_n \to 0$ as $n \to \infty$ for Corollary~\ref{corollary2} for all large $n$; additionally, we have that $\frac{ep\sqrt{T}}{d_{\min}} \le pn^{-1/2} (\log{n})^5 \le 1$ for all large $n$ due to the assumption that $p n^{-1/2} (\log n)^5 \le 1$.

Applying Theorem~\ref{theorem1} with the above result, along with \(\mathbb{P}(\vv{\boldsymbol{X}} \notin [-\delta_n, \delta_n]) = 0\) and \(\varepsilon = 0\), yields the desired conclusion of Corollary~\ref{corollary2}. Note that Condition~\ref{conti} is satisfied here because we have defined $\pi_{S, \vv{d}_S}(\vv{x}) = 1$ on $\mathbb{R}^p$ when $\mathbb{R}^p(S, \vv{d}_S)$ containing a discrete point (see Section~\ref{Sec2.1}), and that $\Theta^{\star} = \Theta$ when $\vv{\boldsymbol{X}}$ is discrete (see Section~\ref{Sec4.2}).

We have completed the proof of Corollary~\ref{corollary2}.

\subsection{Lemma~\ref{lemma11} and its proof}\label{proof.lemma11}

\begin{lemma}\label{lemma11}
    Assume that $f(\vv{x})$ is positive and satisfies 
$\|\nabla \log f(\vv{x})\|_2 \le K_{1} \sqrt{p}\,R$ 
for all $\vv{x}$ such that $\|\vv{x}\|_{\infty} \le R$, 
where $K_{1} > 0$ is a constant. Then, for every $\vv{x}, \vv{y}\in \mathbb{R}^p$ with $\norm{\vv{x}}_{\infty}\le R$, $\norm{\vv{y}}_{\infty} \le R$, and $\norm{\vv{y} - \vv{x}}_2 \le \varepsilon$, it holds that
    $$
     |f(\vv{y}) - f(\vv{x})| 
    \le  K_{1} \varepsilon   \sqrt{p}R \times f(\vv{x}) \times \exp(K_{1}  \sqrt{p}R \varepsilon),
    $$
    
\end{lemma}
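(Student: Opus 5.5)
The plan is to work with $g = \log f$ along the straight segment from $\vv{x}$ to $\vv{y}$ and then exponentiate. Because $\norm{\vv{x}}_\infty \le R$ and $\norm{\vv{y}}_\infty \le R$, and the $\ell_\infty$ ball is convex, every point $\vv{z}(s) = \vv{x} + s(\vv{y}-\vv{x})$ with $s\in[0,1]$ also satisfies $\norm{\vv{z}(s)}_\infty \le R$. The gradient bound $\|\nabla \log f\|_2 \le K_1\sqrt{p}R$ therefore holds along the whole segment. This convexity check is the only place the $\ell_\infty$ constraint on both endpoints is needed, so I will state it explicitly.

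\textbf{Step 1.} Bound the change in $\log f$. By the fundamental theorem of calculus along the segment, which is legitimate since $\log f$ is $C^1$ on the region, and by Cauchy--Schwarz,
\[
|\log f(\vv{y}) - \log f(\vv{x})| = \Bigl|\int_0^1 \nabla \log f(\vv{z}(s))^{\top}(\vv{y}-\vv{x})\,ds\Bigr| \le K_1\sqrt{p}R\,\norm{\vv{y}-\vv{x}}_2 \le K_1\sqrt{p}R\,\varepsilon =: a .
\]

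\textbf{Step 2.} Convert this into a relative bound on $f$. From Step 1, $f(\vv{y})/f(\vv{x}) = e^{t}$ for some $|t|\le a$, so
\[
|f(\vv{y}) - f(\vv{x})| = f(\vv{x})\,|e^{t}-1| .
\]
It remains to use the elementary inequality $|e^{t}-1| \le |t|e^{|t|} \le a e^{a}$. This follows from the mean value theorem: $e^{t}-1 = t e^{\xi}$ for some $\xi$ between $0$ and $t$, and $e^{\xi}\le e^{|t|}$. Combining, we get
\[
|f(\vv{y})-f(\vv{x})| \le K_1\varepsilon\sqrt{p}R\, f(\vv{x})\exp(K_1\sqrt{p}R\varepsilon),
\]
which is the claim.

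There is no real obstacle. The only subtlety is making sure the segment stays in the region where the gradient hypothesis applies, which the convexity of the $\ell_\infty$ ball settles. For the application in \eqref{conti2}, I will also note that when $\varepsilon \le (K_1\sqrt{p}\delta_n)^{-1}$, the exponential factor is at most $e$.
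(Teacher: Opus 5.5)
Your proof is correct, but it takes a somewhat different route from the paper's.

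\textbf{The paper's route.} It applies the fundamental theorem of calculus twice.
\begin{itemize}
\item First to $f$ itself, bounding $|f(\vv{y})-f(\vv{x})|$ by $\varepsilon\sup_{\vv{z}}\|\nabla f(\vv{z})\|_2$, where the supremum runs over the $\varepsilon$-neighbourhood of $\vv{x}$ inside $\{\|\vv{z}\|_\infty\le R\}$.
\item Then to $\log f$, to show $f(\vv{z})\le f(\vv{x})\exp(K_1\sqrt{p}R\varepsilon)$ on that neighbourhood.
\end{itemize}
It then writes $\nabla f = f\,\nabla\log f$ and multiplies the two bounds.

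\textbf{Your route.} You use only the log-difference between the two endpoints and exponentiate. The elementary inequality $|e^t-1|\le |t|e^{|t|}$ replaces the paper's control of $\sup f$ over a neighbourhood and its gradient bound on $f$.

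\textbf{What each buys.} Your version needs a single line integral and never differentiates $f$ itself. It also states explicitly the convexity of the $\ell_\infty$ ball, which the paper uses tacitly to keep its segments inside the region where the score bound holds. It even gives the marginally sharper bound $f(\vv{x})(e^{a}-1)$ with $a=K_1\sqrt{p}R\varepsilon$, since $|e^t-1|\le e^{a}-1\le ae^{a}$ for $|t|\le a$. The paper's route passes through a local gradient (Lipschitz-type) bound on $f$ itself, which could be reused elsewhere. For the lemma as stated and its use in \eqref{conti2}, however, the two arguments are interchangeable.
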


\noindent\textit{Proof of Lemma~\ref{lemma11}: } For any continuously differentiable function $f : \mathbb{R}^p \to \mathbb{R}$ and any $\vv{x}, \vv{y} \in \mathbb{R}^p$, the multivariate version of the Fundamental Theorem of Calculus (or gradient integral formula along a line) gives
\[
f(\vv{y}) - f(\vv{x})
= \int_0^1 \nabla f(\vv{x} + t(\vv{y}-\vv{x}))^{\top} (\vv{y}-\vv{x}) \, dt.
\]
Hence,
\[
|f(\vv{y}) - f(\vv{x})|
\le \|\vv{y}-\vv{x}\|_2
\sup_{t \in [0,1]} \|\nabla f(\vv{x} + t(\vv{y}-\vv{x}))\|_2,
\]
implying that for every $\|\vv{x}\|_{\infty} \le R$, $\|\vv{y}\|_{\infty} \le R$, $\|\vv{x} - \vv{y}\|_2 \le \varepsilon$,
\begin{equation}
    \label{fundamental.1}
     |f(\vv{y}) - f(\vv{x})|
\le \varepsilon \sup_{\|\vv{z}\|_{\infty} \le R, \|\vv{x} - \vv{z}\|_2 \le \varepsilon} \|\nabla f(\vv{z})\|_2.
\end{equation}

Additionally, we apply the fundamental theorem of calculus along the line segment connecting $\vv{x}$ and $\vv{y}$:
\[
\log f(\vv{y}) - \log f(\vv{x})
= \int_{0}^{1} 
\nabla \log f\big(\vv{x} + t(\vv{y} - \vv{x})\big)^{\top} 
(\vv{y} - \vv{x})\, dt.
\]
This, along with our assumption that 
$\|\nabla \log f(\vv{x})\|_2 \le K_{1} \sqrt{p}\,R$ 
for all $\|\vv{x}\|_{\infty} \le R$, 
implies that for any $\vv{x}, \vv{y}$ satisfying 
$\|\vv{x}\|_{\infty} \le R$, $\|\vv{y}\|_{\infty} \le R$, 
and $\|\vv{x} - \vv{y}\|_2 \le \varepsilon$, we have
\[
|\log f(\vv{y})-\log f(\vv{x})|
\le \varepsilon\sup_{\|\vv{z}\|_{\infty} \le R, \|\vv{x} - \vv{z}\|_2 \le \varepsilon}\|\nabla\log f(\vv{z})\|_2
\le \varepsilon\times K_{1}\sqrt{p}\,R.
\]
Hence, for every $\|\vv{x}\|_{\infty} \le R$, $\|\vv{y}\|_{\infty} \le R$, $\|\vv{x} - \vv{y}\|_2 \le \varepsilon$,
\begin{equation*}    
    \exp(\log f(\vv{y})-\log f(\vv{x})) =  \frac{f(\vv{y})}{f(\vv{x})} \le \exp(\varepsilon\times K_{1}\sqrt{p} R),
\end{equation*}
leading to that for every $\|\vv{x}\|_{\infty} \le R$,
\begin{equation}
    \label{lemma10.2}
    \sup_{ \|\vv{z}\|_{\infty}\le R, \|\vv{z} - \vv{x}\|_2 \le \varepsilon} f(\vv{z}) \le f(\vv{x}) \exp(\varepsilon\times K_{1}\sqrt{p} R).
\end{equation}

By \eqref{fundamental.1}--\eqref{lemma10.2}, and our assumption, we deduce that for every $\|\vv{x}\|_{\infty} \le R$, $\|\vv{y}\|_{\infty} \le R$, $\|\vv{x} - \vv{y}\|_2 \le \varepsilon$,
 \begin{equation*}
     \begin{split}
          |f(\vv{y}) - f(\vv{x})|
& \le \varepsilon  \sup_{ \|\vv{z}\|_{\infty}\le R, \|\vv{z} - \vv{x}\|_2 \le \varepsilon }\|\nabla f(\vv{z})\|_2 \\
& = \varepsilon \sup_{ \|\vv{z}\|_{\infty}\le R, \|\vv{z} - \vv{x}\|_2 \le \varepsilon } [\|\nabla \log f(\vv{z})\|_2 f(\vv{z})]\\
& \le \varepsilon \times [\sup_{ \|\vv{z}\|_{\infty}\le R  } \|\nabla \log f(\vv{z})\|_2 ] \times \sup_{ \|\vv{z}\|_{\infty}\le R, \|\vv{z} - \vv{x}\|_2 \le \varepsilon } f(\vv{z})\\
& \le  \varepsilon \times K_{1} \sqrt{p}R \times f(\vv{x}) \times \exp(K_{1}  \sqrt{p}R \varepsilon),
     \end{split}
 \end{equation*}
 where the first inequality follows from \eqref{fundamental.1}, the first equality is due to $f(\vv{x}) > 0$ on $\|\vv{x}\|_{\infty}\le R$, and the third inequality results from our assumptions.

	\renewcommand{\theequation}{D.\arabic{equation}}
	\renewcommand{\thesubsection}{D.\arabic{subsection}}
	\setcounter{equation}{0}

\section{Supplementary Material}

\subsection{Examples of Distributions Satisfying Condition~\ref{conti}}\label{example.distribution}

We present three examples of distributions satisfying Condition~\ref{conti}. To handle the non-differentiability of the uniform distribution at its boundary, we consider a slightly enlarged support in Example~\ref{unif} (or, alternatively, a smoothly clipped version). Note that finite mixtures of these distributions, such as Gaussian or Student-$t$ mixtures, similarly satisfy Condition~\ref{conti}, though we omit those derivations for brevity.

\begin{exmp}\label{studentT}
Let $f(\vv{x})$ be the multivariate centered Student-$t$ density with a uniformly positive definite scale matrix $\Sigma\in\mathbb{R}^{p\times p}$ and degrees of freedom $\nu>0$. Assume $p / \nu$ is uniformly bounded. Then, Condition~\ref{conti} holds for every $R>0$.

\end{exmp}

\begin{exmp}\label{mulgaussian}
Let $f(\vv{x})$ be the multivariate Gaussian distribution with a uniformly positive definite covariance matrix. Then, Condition~\ref{conti} holds for every $R>0$.
\end{exmp}

\begin{exmp}\label{unif}
Let $f(\vv{x})$ be a uniform distribution on $[-\delta_n -  \epsilon, \delta_n + \epsilon]^p$ for some small $\epsilon>0$. Then, Condition~\ref{conti} holds for $R= \delta_n$.
\end{exmp}

In Examples~\ref{studentT}--\ref{unif}, constant $K_{1}$ of Condition~\ref{conti} is independent of $R$.

\noindent\textit{Proof of Examples~\ref{studentT}--\ref{unif}: } We begin with establishing Condition~\ref{conti} for Example~\ref{studentT}. Let $f$ be the multivariate Student-$t$ density with location $\vv{\mu}\in\mathbb{R}^p$, positive definite scale matrix $\Sigma\in\mathbb{R}^{p\times p}$, and degrees of freedom $\nu>0$. Specifically,
\[
f(\vv{x}) =
\frac{
\Gamma\!\big(\frac{\nu + p}{2}\big)
}{
\Gamma\!\big(\frac{\nu}{2}\big) \, (\nu \pi)^{p/2} \, D(\Sigma)^{1/2}
}
\left(
1 + \frac{1}{\nu} (\vv{x}-\vv{\mu})^{\top} \Sigma^{-1} (\vv{x}-\vv{\mu})
\right)^{-(\nu + p)/2}, 
\]
where $D(\Sigma)$ denotes the determinant of the scale (or covariance) matrix. Additionally, we assume in Example~\ref{studentT} that $\vv{\mu}$ is a vector of zeros.

Let us begin the formal proof. The derivative of log-density is given by
\[
\nabla\log f(\vv{x})
= -\frac{\nu+p}{\nu}\times\frac{\Sigma^{-1}\vv{x}}{1+\vv{x}^{\top}\Sigma^{-1}\vv{x}/\nu}.
\]
Hence
\[
\|\nabla\log f(\vv{x})\|_2
\le \frac{\nu+p}{\nu}\,\frac{\|\Sigma^{-1}\|_2\,\|\vv{x}\|_2}{1+\vv{x}^{\top}\Sigma^{-1}\vv{x}/\nu}
\le \frac{\nu+p}{\nu}\,\|\Sigma^{-1}\|_2\,\|\vv{x}\|_2,
\]
since $(1+\vv{x}^{\top}\Sigma^{-1}\vv{x}/\nu)^{-1}\le1$. If $\max_j|x_j|\le R$ then
\(\|\vv{x}\|_2 \le \sqrt{p}\,R \), so
\begin{equation}
    \label{example.student.1}
    \|\nabla\log f(\vv{x})\|_2
\le \frac{\nu+p}{\nu}\,\|\Sigma^{-1}\|_2\sqrt{p}\,R.
\end{equation}
Thus one can take  \(K_{1}=(\nu+p)/\nu\;\|\Sigma^{-1}\|_2\) to conclude Condition~\ref{conti} for every $R>0$. We have completed the proof of Example~\ref{studentT}.

Now, we turn to establish Condition~\ref{conti} for Example~\ref{mulgaussian}. Let $f$ denote the multivariate Gaussian density:
\[
f(\vv{x}) =
\frac{1}{(2\pi)^{p/2} D(\Sigma)^{1/2}}
\exp\!\Big(-\tfrac{1}{2}(\vv{x}-\vv{\mu})^{\top}\Sigma^{-1}(\vv{x}-\vv{\mu})\Big),
\]
where \( \vv{\mu} \in \mathbb{R}^p \) is the mean vector, and
\( \Sigma \in \mathbb{R}^{p \times p} \) is a positive definite covariance matrix. In Example~\ref{mulgaussian}, we consider the scenario with $\vv{\mu}$ being a vector of zeros.

The log-density is given by
\[
\log f(\vv{x}) 
= -\frac{p}{2} \log(2\pi) 
- \frac{1}{2} \log D(\Sigma) 
- \frac{1}{2} \vv{x}^{\top} \Sigma^{-1} \vv{x}.
\]
Hence, the gradient of the log-density is
\[
\nabla \log f(\vv{x}) = -\Sigma^{-1} \vv{x}.
\]

Thus, for $\vv{x}\in \mathbb{R}^p$ with $\norm{\vv{x}}_{\infty}\le R$,
\[
\|\nabla \log f(\vv{x})\|_2 = \|\Sigma^{-1} \vv{x}\|_2 \le \|\Sigma^{-1} \|_2 \|\vv{x}\|_2\le\|\Sigma^{-1} \|_2 \sqrt{p}R.
\]

Using this result and setting $K_{1} = \|\Sigma^{-1}\|_2$, we establish Condition~\ref{conti} for all $R > 0$ in Example~\ref{mulgaussian}.

Meanwhile, the analysis for Example~\ref{unif} is straightforward and therefore omitted. This completes the proofs for Examples~\ref{studentT}--\ref{unif}.

\subsection{Proof for Section~\ref{Sec4.1}}\label{proof.sec4.1}

    When $r=0$, Condition~\ref{sparsity}(a) is satisfied under Examples~\ref{k_modals}--\ref{k_modals_unif} because $\boldsymbol{X}_1, \dots, \boldsymbol{X}_p$ are i.i.d. 
    
    On the other hand, consider Example~\ref{k_modals_unif} with $r>3$. If we set $\delta_n \ge \frac{3}{2} + r$ and choose sufficiently many evenly spaced splitting points, then for each variable $j\in \{1, \dots, p\}$, every one-dimensional bin interval $B \subset \mathbb{R}$ satisfies either $\sup B \le 1.5$ or $\inf B \ge 1.5$ (bins are mutually disjoint). Hence each bin intersects with at most one of the two modes, which implies that the mode of the $j$th variable is already determined once $\vv{\boldsymbol{H}}_{l}$ is observed for any $1 \le l < j$ in this scenario. Therefore, for every $j>1$, each $1\le k \le m_j$, and each $l\in \{j-1, \dots, 1\}$,
\[
\mathbb{P}\!\left(\boldsymbol{H}_{jk} = 1 \,\middle|\, \vv{\boldsymbol{H}}_{j-1}, \dots, \vv{\boldsymbol{H}}_{1}\right)
=
\mathbb{P}\!\left(\boldsymbol{H}_{jk} = 1 \,\middle|\, \vv{\boldsymbol{H}}_{l}\right),
\]
and Condition~\ref{sparsity}(a) holds in this case with $S_{j} = \{l\}$ for any $1\le l < j$.

\subsection{Strong Mixing Processes} \label{SecB.1}
A strong mixing or $\alpha$-mixing process is defined as follows.
A stochastic process $\{\vv{\boldsymbol{X}}_t\}_{t \ge 1}$ is said to be \textit{strongly mixing} (or $\alpha$-mixing) if there exists a sequence $\{\alpha(k)\}_{k \ge 1}$ with $\alpha(k) \to 0$ as $k \to \infty$ such that
\[
\alpha(k) \coloneqq \sup_{t \ge 1} \sup_{A \in \mathcal{F}_1^t, \, B \in \mathcal{F}_{t+k}^{\infty}} \big| \mathbb{P}(A \cap B) - \mathbb{P}(A)\mathbb{P}(B) \big|,
\]
where $\mathcal{F}_1^t$ and $\mathcal{F}_{t+k}^{\infty}$ are the $\sigma$-fields generated by $\{\vv{\boldsymbol{X}}_s : 1 \le s \le t\}$ and $\{\vv{\boldsymbol{X}}_s : s \ge t+k\}$, respectively. If there exist constants $\gamma_0, \gamma_1 > 0$ such that 
\[
\alpha(k) \le \gamma_0 e^{-\gamma_1 k} \quad \forall k \ge 1,
\] 
the process is called geometrically strongly mixing.


\end{document}